\documentclass[sigconf]{acmart}
\AtBeginDocument{%
  }


\copyrightyear{2026}
\acmYear{2026}
\setcopyright{cc}
\setcctype{by}
\acmConference[WWW '26]{Proceedings of the ACM Web Conference 2026}{April 13--17, 2026}{Dubai, United Arab Emirates}
\acmBooktitle{Proceedings of the ACM Web Conference 2026 (WWW '26), April 13--17, 2026, Dubai, United Arab Emirates}
\acmPrice{}
\acmDOI{10.1145/3774904.3792133}
\acmISBN{979-8-4007-2307-0/2026/04}



\usepackage[utf8]{inputenc} 
\usepackage[T1]{fontenc}    
\usepackage{hyperref}       
 
\usepackage{url}            
\usepackage{booktabs}       
\usepackage{amsfonts}       
\usepackage{nicefrac}       
\usepackage{microtype}      
\usepackage{xcolor}         
\usepackage{enumitem}
\usepackage{amsthm}
\usepackage{amsmath}
\usepackage{bm}

\usepackage{amssymb}
\usepackage{mathtools}
\usepackage{makecell}
\usepackage{multirow}
\usepackage{balance}
\usepackage{booktabs}
\usepackage{wrapfig}
\usepackage{extarrows}
\usepackage{colortbl}
\usepackage{tikz}
\usepackage{pifont}
\usepackage[most]{tcolorbox}
\usepackage{algorithm}
\usepackage{algpseudocode}

\usepackage{graphicx} 
\usepackage{subcaption} 
\usepackage{amsmath} 
\usepackage{lipsum} 

\newcommand{\modelname}{DARK}

\renewcommand\eqref[1]{\textup{(\ref{#1})}}

\usepackage{amsthm}

\theoremstyle{plain} 
\newtheorem{theorem}{Theorem}[section]
\newtheorem{lemma}[theorem]{Lemma}

\theoremstyle{definition} 
\newtheorem{assumption}[theorem]{Assumption}

\newtheorem{definition}[theorem]{Definition}

\theoremstyle{remark}


\newtcolorbox{mathbox}{
  colframe=black,
  colback=white,
  sharp corners,
  boxrule=0.5pt,
  breakable,
  left=1.5pt, right=1.5pt, top=1.5pt, bottom=1.5pt
}


\newcommand{\jx}[1]{\color{black}{#1}}
\begin{document}

\title{Unifying Deductive and Abductive Reasoning in Knowledge Graphs with Masked Diffusion Model}


\author{Yisen Gao}
\email{ygaodi@cse.ust.hk}
\orcid{0009-0006-7564-5171}
\affiliation{%
  \institution{Computer Science and Engineering Hong Kong University of Science and Technology}
  \city{Hong Kong}
  \country{China}
}

\author{Jiaxin Bai}
\orcid{0000-0002-8985-6467}
\email{jbai@connect.ust.hk}
\affiliation{%
  \institution{Computer Science and Engineering Hong Kong University of Science and Technology}
  \city{Hong Kong}
  \country{China}
}

\author{Yi Huang}
\email{yihuang@buaa.edu.cn}
\orcid{0009-0008-6712-9446}
\affiliation{%
 \institution{Computer
Science and Engineering \\Beihang University}
 \city{Beijing}
 \country{China}}

\author{Xingcheng Fu}
\email{fuxc@gxnu.edu.cn}
\orcid{0000-0002-4643-8126}
\affiliation{%
 \institution{Key Lab of Education Blockchain and Intelligent Technology \\Guangxi Normal University}
 \city{Guilin}
 \country{China}}

\author{Qingyun Sun}
\email{sunqy@buaa.edu.cn}
\orcid{0000-0003-1930-3848}
\affiliation{%
 \institution{Computer
Science and Engineering \\Beihang University}
 \city{Beijing}
 \country{China}}

\author{Yangqiu Song}
\authornote{The corresponding author}
\email{yqsong@cse.ust.hk}
\orcid{0000-0002-7818-6090}
\affiliation{%
  \institution{Computer Science and Engineering Hong Kong University of Science and Technology}
  \city{Hong Kong}
  \country{China}
}

\renewcommand{\shortauthors}{Yisen Gao et al.}

\begin{abstract}
  Deductive and abductive reasoning are two critical paradigms for analyzing knowledge graphs, enabling applications from financial query answering to scientific discovery. Deductive reasoning on knowledge graphs usually involves retrieving entities that satisfy a complex logical query, while abductive reasoning generates plausible logical hypotheses from observations. Despite their clear synergistic potential, where deduction can validate hypotheses and abduction can uncover deeper logical patterns, existing methods address them in isolation. To bridge this gap, we propose \textbf{DARK}, a unified framework for \textbf{D}eductive and \textbf{A}bductive \textbf{R}easoning in \textbf{K}nowledge graphs. As a masked diffusion model capable of capturing the bidirectional relationship between queries and conclusions, DARK has two key innovations. First, to better leverage deduction for hypothesis refinement during abductive reasoning, we introduce a self-reflective denoising process that iteratively generates and validates candidate hypotheses against the observed conclusion. Second, to discover richer logical associations, we propose a logic-exploration reinforcement learning approach that simultaneously masks queries and conclusions, enabling the model to explore novel reasoning compositions. Extensive experiments on multiple benchmark knowledge graphs show that DARK achieves competitive performance on both deductive and abductive reasoning tasks, demonstrating the significant benefits of our unified approach.
\end{abstract}

\begin{CCSXML}
<ccs2012>
   <concept>
       <concept_id>10002950.10003624.10003633.10010917</concept_id>
       <concept_desc>Mathematics of computing~Graph algorithms</concept_desc>
       <concept_significance>300</concept_significance>
       </concept>
   <concept>
       <concept_id>10002951.10003317</concept_id>
       <concept_desc>Information systems~Information retrieval</concept_desc>
       <concept_significance>300</concept_significance>
       </concept>
 </ccs2012>
\end{CCSXML}

\ccsdesc[300]{Mathematics of computing~Graph algorithms}
\ccsdesc[300]{Information systems~Information retrieval}
\keywords{Knowledge Graph, Abductive Reasoning, Knowledge Graph Reasoning, Diffusion Model}


\maketitle

\section{Introduction}
\begin{quote}
\textit{The whole is greater than the sum of its parts.} 
\begin{flushright}
————— Aristotle
\end{flushright}
\end{quote}


{\jx
The long-standing vision of the Semantic Web~\cite{semantic}, as a core pursuit of the web research community, is increasingly realized through large-scale knowledge graphs~\cite{kgsemantic,kgsemantic2}. These structures are now the backbone of the modern web, organizing knowledge to power semantic search~\cite{kgsemantic3}, enhance recommendation systems~\cite{kgrecommend,kgrecommend2}, and enable intelligent assistants~\cite{kgassistant,ngdb} at a global scale. As the web evolves into a decentralized ecosystem of interlinked knowledge graphs~\cite{openkgchain,kgdecentralized}, the ability to reason effectively over this incomplete and noisy data becomes paramount for unlocking the next generation of intelligent Web applications.
}

{\jx 
Within this context, much of the community's focus has been on \textbf{deductive reasoning}~\cite{kgdeduction,kgdeduction2}, which systematically derives conclusions from existing knowledge facts and logical rules. This is crucial for tasks like Complex Query Answering (CQA)~\cite{kgreasoning,kgreasoning2,kgreasoning3}, where users or systems seek specific entities satisfying intricate logical conditions. Existing methods for CQA typically model entities, relations, and logical rules using probabilistic~\cite{betae,probabilistic}, geometric~\cite{multihop,query2box,query2particles}, graph neural network~\cite{ultra,gnnqe} or transformer-based~\cite{enhancing,bai2023complex} approaches, capturing both structural and semantic properties of the graph to enable accurate and efficient query answering.}


On the other hand, \textbf{abductive reasoning}~\cite{abducreview,abduction}, the task of generating hypotheses to best explain observations, has recently gained attention for its potential in more creative and exploratory tasks~\cite{qin2021futureunsupervisedbackpropbaseddecoding, chan2023selfconsistentnarrativepromptsabductive, zhao2024uncommonsensereasoningabductivereasoning}. In the context of the web~\cite{abductiveforweb}, this could mean inferring a user's potential intent from sparse activity logs or discovering new scientific hypotheses from a web of interconnected research data. Building on fundamental works like AbductiveKGR~\cite{akgr}, which extend abductive reasoning tasks to knowledge graphs and propose related generation frameworks, this represents an emerging yet highly promising direction for network-scale discover.


However, existing research~\cite{akgr,multihop,kgdeduction} has largely studied deductive and abductive reasoning separately, overlooking their close interdependence. As illustrated in Fig.~\ref{fig:sub1}, the two paradigms are inherently complementary: abductive reasoning generates hypotheses based on observed entities, which must then be validated through deductive reasoning, while neural-centric deductive reasoning can leverage abductive reasoning to uncover deeper logical structures within the knowledge graph, thereby enhancing model understanding and predictive accuracy. Yet, most approaches treat the two paradigms in isolation, failing to fully harness their combined potential. This motivates the development of a unified framework that integrates both reasoning types, allowing them to mutually reinforce each other and unlock the full power of knowledge graph reasoning {\jx for the web}.

To achieve this unified framework, our key insight is that deductive and abductive reasoning correspond to two directions of reasoning between logical queries and their conclusions. As illustrated in Fig.~\ref{fig:sub2}, the inherent knowledge facts in the knowledge graph serve as the major premises, while each query and its associated conclusion constitute a subset of these facts. Deductive reasoning leverages knowledge and known queries to infer the corresponding conclusions, whereas abductive reasoning uses knowledge and observed conclusions to infer the underlying queries. This intrinsic bidirectionality of reasoning naturally aligns with recent advances in masked diffusion models~\cite{llada,dream7b}, whose inherent bidirectional generation has enabled them to surpass large language models~\cite{gpt4,deepseek} following the autoregressive paradigm on reasoning tasks. Building on this consistency, we are motivated to develop a masked diffusion framework that captures the synergy between deductive and abductive reasoning.

\begin{figure*}[t!]
    \centering
    \begin{subfigure}[t]{0.48\textwidth}
        \centering
        \includegraphics[width=\linewidth]{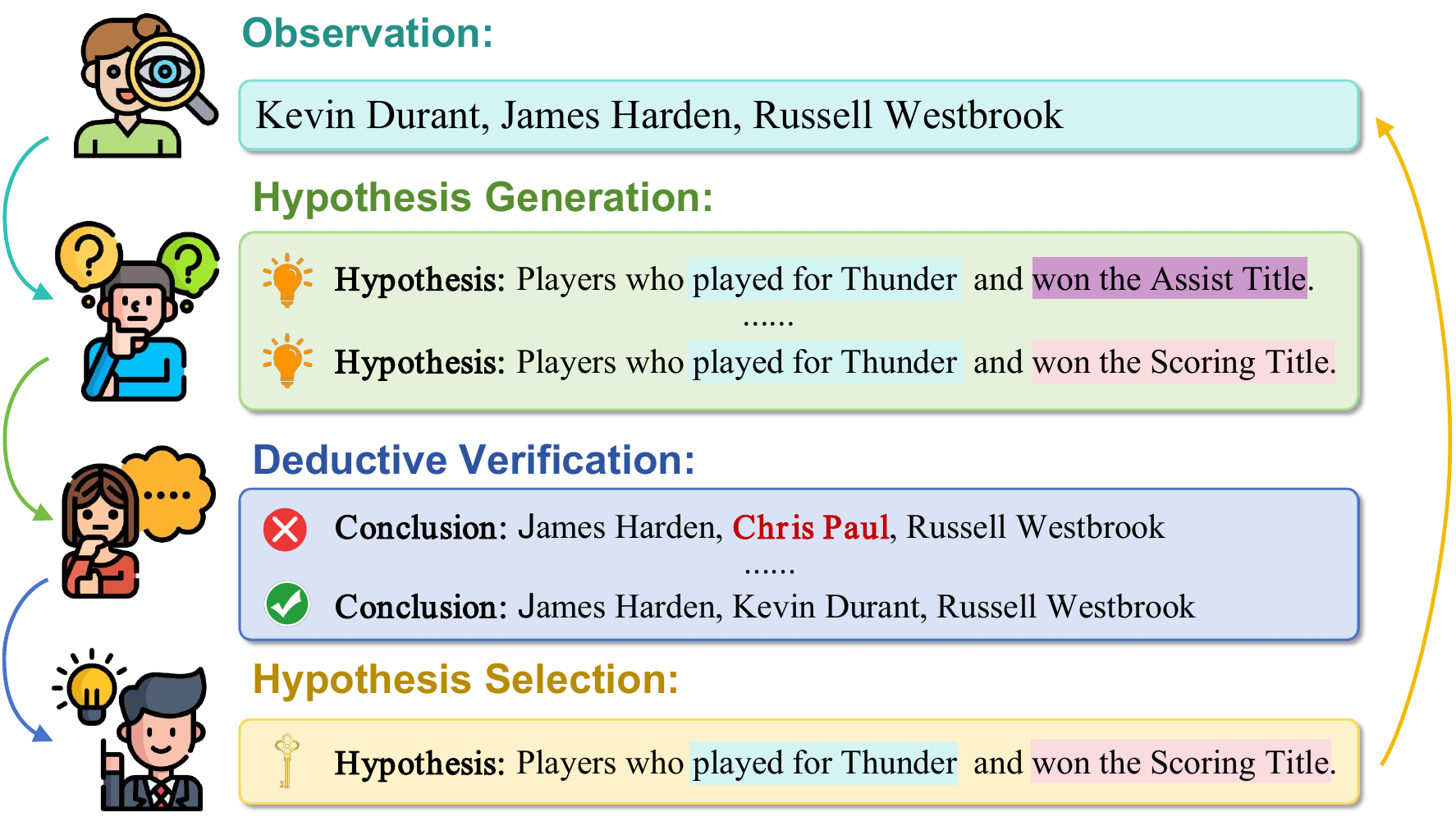}
        \caption{Complementarity between deductive and abductive reasoning}
        \label{fig:sub1}
    \end{subfigure}
    \hfill
    \begin{subfigure}[t]{0.48\textwidth}
        \centering
        \includegraphics[width=\linewidth]{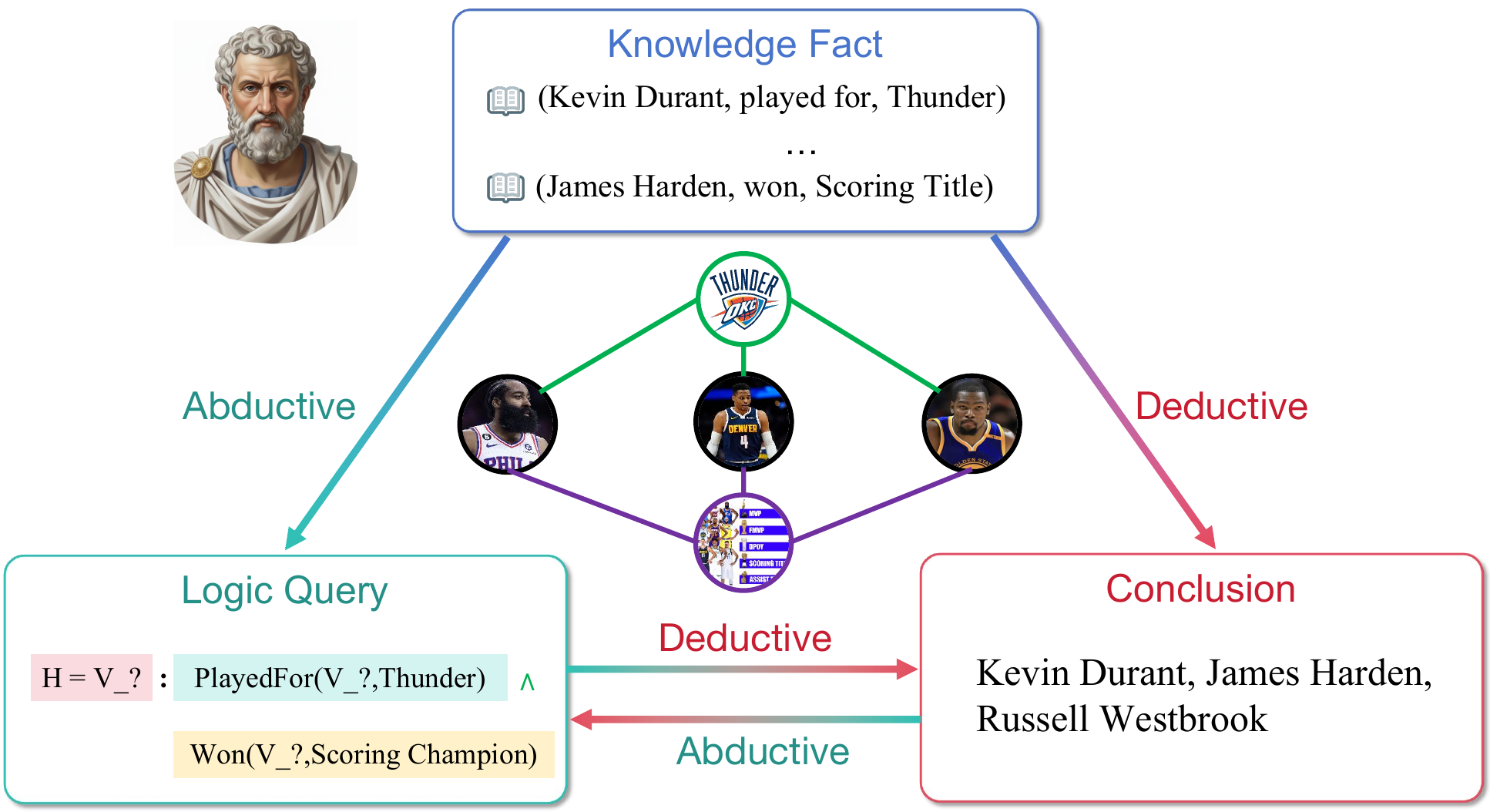}
        \caption{Bidirectional reasoning between queries and conclusions}
        \label{fig:sub2}
    \end{subfigure}
    \caption{The intrinsic connection between deductive reasoning and abductive reasoning.}
    \label{fig:main}
\end{figure*}

However, a naive diffusion model alone is insufficient, as it can only implicitly learn the joint distribution between queries and their corresponding conclusions, without explicitly leveraging the bidirectional synergy between deductive and abductive reasoning. Specifically, it faces two main challenges. First, deductive reasoning is not effectively utilized during the abductive reasoning process, limiting the model’s ability to validate and refine generated hypotheses. Second, the model lacks a thorough exploration of the underlying logical relationships within the knowledge graph through both types of reasoning, which hinders its capacity to capture rich structural and semantic dependencies. 

According to the above analysis, we propose \textbf{DARK}, a masked diffusion model for unifying \textbf{D}eductive and \textbf{A}bductive \textbf{R}easoning in \textbf{K}nowledge graphs. Our approach treats a logical query and its corresponding conclusion as a unified sequence, which is modeled within a masked diffusion framework to capture their joint distribution and enable effective training for downstream reasoning tasks.
To better leverage deductive reasoning for hypothesis verification and refinement during abductive reasoning, we introduce a self-reflective denoising process: at each denoising step, the model generates multiple candidate logical hypotheses and uses deductive reasoning to select the one most consistent with the observed conclusion for the next step.
To address the limited exploration of logical associations, we further propose a logic-exploration reinforcement learning method. By simultaneously masking queries and conclusions, the model can freely explore more plausible logical relationships and uncover richer structural dependencies.
Extensive experiments demonstrate that DARK achieves impressive performance on both abductive and deductive reasoning tasks, validating the effectiveness of our unified framework.
Our contributions can be summarized as follows:

\begin{itemize}[leftmargin=*,itemsep=0pt]
    \item We are the first to unify deductive and abductive reasoning in knowledge graphs by treating them as two directions of the same logical process, and model this using a masked diffusion framework that captures the joint distribution of queries and conclusions.
    \item We introduce a self-reflective denoising mechanism that mirrors human abductive reasoning by generating multiple candidate hypotheses at each step and selecting the one most consistent with the observed conclusion.
    \item To further uncover diverse logical possibilities and semantic associations, we propose a logic-exploration reinforement approach, which simultaneously masks queries and conclusions, enabling flexible exploration of plausible logical relationships under reinforcement learning.
    \item Extensive experiments on three knowledge graph benchmarks demonstrate that DARK achieves impressive performance on both abductive and deductive reasoning tasks, validating the effectiveness and generality of our unified framework.
\end{itemize}

\section{Related Work}

\textbf{Deductive Reasoning in Knowledge Graphs}. In recent years, research on deductive reasoning in knowledge graphs has primarily focused on answering complex logical queries by refining query and answer embeddings with the application of logical operators. Existing approaches can be broadly categorized into several paradigms. Geometric methods~\cite{multihop,query2box,query2particles, bai2023complex, bai2023knowledge, bai2024understanding} embed entities and logical constructs into continuous geometric objects (e.g., boxes, cones), where logical composition is realized as geometric transformations or set operations, thereby enabling interpretable and efficient reasoning.  Probabilistic approaches~\cite{betae,probabilistic} represent queries as distributions over candidate answers, explicitly modeling uncertainty to support multi-step inference. Neural-symbolic frameworks~\cite{faithful} integrate symbolic logical operators with neural network, striking a balance between explicit logical manipulation and flexible representation learning.
Neural methods~\cite{bertqa,gnnqe,ultra,enhancing} directly predict the probability of each candidate entity as an answer, typically in an end-to-end manner using transformers or GNNs to capture multi-hop dependencies and structural patterns.

\textbf{Abductive Reasoning}. Abductive reasoning has been widely studied in both natural language and neuro-symbolic domains. In natural language inference, $\alpha$-NLI~\cite{bhagavatula2020abductivecommonsensereasoning} introduces abductive reasoning into commonsense inference by generating plausible explanations for given observations, with subsequent work extending this paradigm to rare events and more complex logical settings~\cite{qin2021futureunsupervisedbackpropbaseddecoding, kadiķis2022embarrassinglysimpleperformanceprediction, chan2023selfconsistentnarrativepromptsabductive, zhao2024uncommonsensereasoningabductivereasoning}. Benchmarks such as ProofWriter~\cite{tafjord2021proofwritergeneratingimplicationsproofs} evaluate formal abductive reasoning over semi-structured texts with explicit logical relations, while recent studies further explore abductive capabilities of large language models in open-world and abstract reasoning scenarios~\cite{zhong2023chatablabductivelearningnatural, del2023truedetectivedeepabductive, thagard2024chatgptmakeexplanatoryinferences, liu2024incompleteloopinstructioninference, zheng2025logidynamicsunravelingdynamicslogical}. In parallel, neuro-symbolic approaches such as Abductive Learning (ABL)\cite{ABL} integrate perception and symbolic inference, with extensions like ARLC\cite{ABL2} and ABL-Refl~\cite{ABL3} improving context-awareness, error correction, and generalization. Within knowledge graphs, abductive reasoning focuses on generating plausible first-order logical hypotheses given observed entities. Representative methods include AbductiveKGR~\cite{akgr}, which adopts an autoregressive hypothesis generation paradigm, and CtrlHGen~\cite{ctrlhgen}, which introduces controllability to reduce redundant hypotheses. Related studies on rule mining~\cite{rule1,rule2,rule3,rules}, although primarily inductive, also provide useful insights for guiding abductive reasoning over knowledge graphs.
\section{Preliminaries}
\label{sec:preliminary}

We represent a knowledge graph as \( G = (V, R) \), where \( V \) is the set of entities and \( R \) is the set of relation types. Each relation \( r \in R \) is a binary predicate \( r(u, v) \in \{\text{true}, \text{false}\} \), indicating whether the triple \( (u, r, v) \) exists in \( G \).
Under the open-world assumption~\cite{openworld}, the observed graph \( G \) is incomplete and contains only a subset of true facts. Unobserved triples are treated as unknown rather than false, and \( G \) is assumed to be a subgraph of the complete but hidden knowledge graph \( \bar{G} \).

A complex query over knowledge graphs is defined in first-order logic form with logical operators such as existential quantifiers ($\exists$), conjunctions ($\land$), disjunctions ($\lor$), and negations ($\lnot$). The query $Q$ can also be written in disjunctive normal form:
\begin{equation}\begin{aligned}
Q(V_?) & =\exists V_1,\ldots,V_k:e_1\vee\cdots\vee e_n, \\
e_i & =r_{i1}\wedge\cdots\wedge r_{im_i},
\end{aligned}\end{equation}
where ${V_1,\ldots,V_k}$ denotes a subset of $V$.
Each $r_{ij}$ is either of the form $r(u,v)$ or $\neg r(u,v)$, where $u$ and $v$ are drawn from $\{V_1,\ldots,V_k\}$ or correspond to the target variable $V_?$ representing an arbitrary entity in the graph $G$.

\begin{definition}[Complex Query Answering in Knowledge Graphs]
Given a knowledge graph $G$, a first-order logic query $Q$, the goal of complex query answering in the knowledge graph is to find the set of all entities $v_? \in V$ that satisfy the query:
\begin{equation}
Q(v_?) = \text{True}.
\end{equation}
Here, the conclusion of query $Q$ is denoted by $[Q]_{\bar{G}} = \{ v_? \in V \mid Q(v_?) = \text{True} \}$ when executed on the underlying complete graph $\bar{G}$. In this work, we regard the complex query answering task as a form of deductive reasoning on knowledge graphs.

\end{definition}

\begin{definition}[Hypothesis Generation in Knowledge Graphs]
Given a knowledge graph $G=(V,R)$, we define an observation $O$ as the conclusion of a query, represented as a set of entities $O=\{o_1,o_2,\ldots,o_n\}$, where each $o_i \in V$ for all $i \in \{1,\ldots,n\}$. Intuitively, the observation corresponds to the entities that are identified as the answer set of an implicit query in the graph. A hypothesis, in contrast, is expressed as a first-order logic query $Q$ that encodes a possible explanation for why the observation arises from the graph.

The objective of complex hypothesis generation task in knowledge graphs is to identify a hypothesis that serves as the most plausible explanation for the observation $O$, meaning that the conclusion derived from the hypothesis in $G$, denoted as $[Q]_{G}$, is maximally consistent with $O$.
Formally, the goal is to find an optimal hypothesis $Q^*$ that satisfies the following condition:
\begin{equation}
    Q^*=\arg\max_{Q}\operatorname{Jaccard}([Q]_{\bar{G}},O)=\arg\max_{Q}\frac{|[Q]_{\bar{G}}\cap O|}{|[Q]_{\bar{G}}\cup O|}
\end{equation}
Here, the Jaccard Index is used to quantify the similarity between the query result $[Q]_{\bar{G}}$ and the observation $O$, ensuring that the selected $Q^*$ provides the closest possible explanation for the given observation. In this paper, we view the complex logical hypothesis generation task as a problem of abductive reasoning on knowledge graphs, aiming to infer explanatory logical queries that best account for the observed entities.
\end{definition}

\section{Method}
In this section, we propose DARK, a masked diffusion model for unifying deductive and abductive reasoning in knowledge graphs. The framework has been shown in Fig~\ref{fig:frame}.  In Section~\ref{subsec:overview}, we first present the core insights that motivate our approach to unifying these two reasoning paradigms. In Section~\ref{subsec:supervised}, we describe how the masked diffusion model serves as the foundation for this unification and propose a self-reflective sampling strategy, where deduction is employed to refine hypotheses in the abductive reasoning process. In Section~\ref{subsec:rl}, we propose a logic-exploration reinforcement learning method that enables the model to autonomously explore a broader range of logical possibilities, thereby deepening its intrinsic understanding of logic. Finally, we analyze the time and space complexity of the proposed self-reflective sampling method in Appendix B.
\begin{figure*}[htbp]
    \centering
    \includegraphics[width=0.95\textwidth]{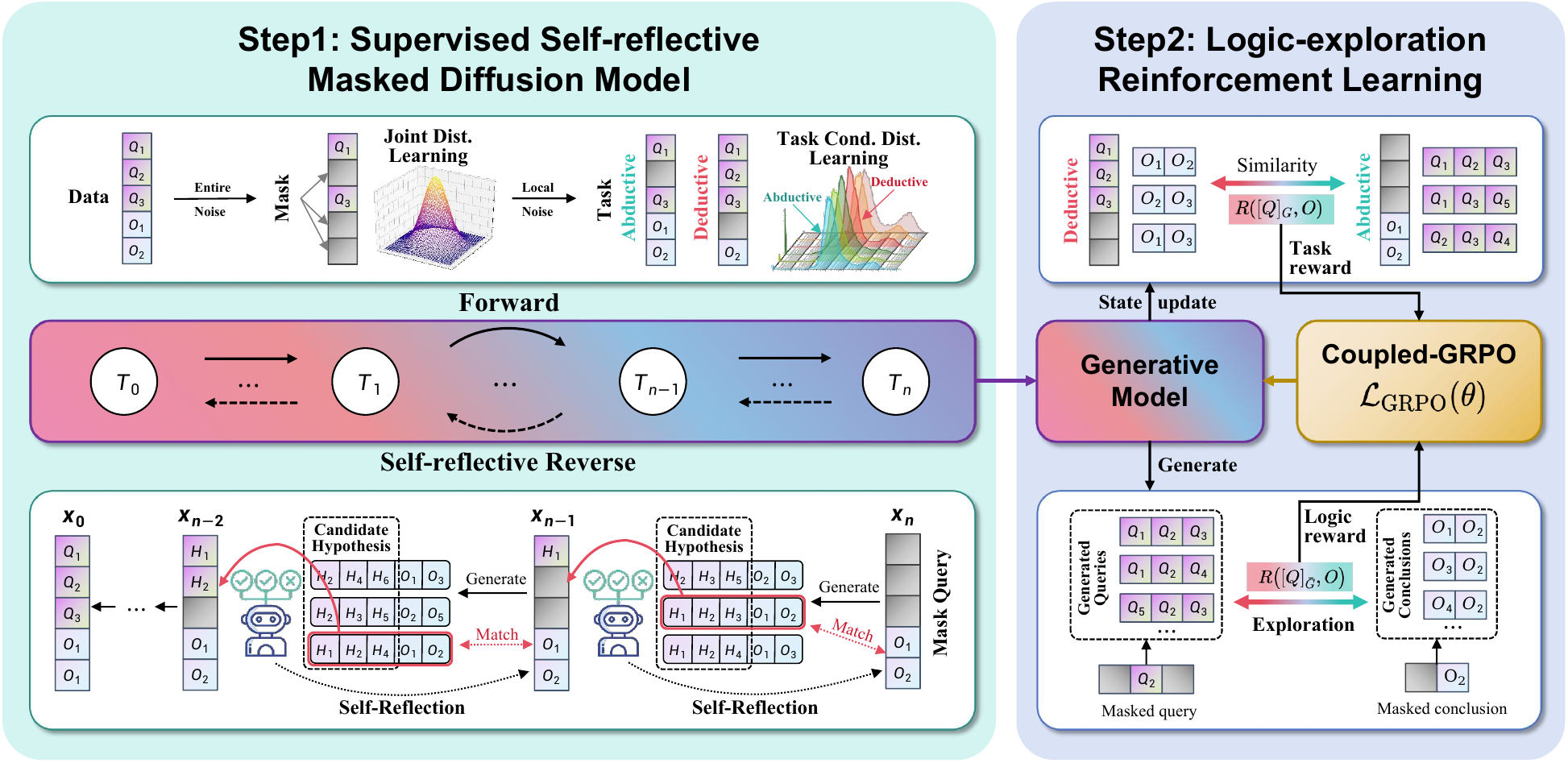}
    \caption{The framework of ~\modelname.}
    \label{fig:frame}
\end{figure*}
\subsection{Overview}
\label{subsec:overview}
Here, we first present an overview of our proposed DARK, which provides a unified framework for both deductive and abductive reasoning in knowledge graphs. Given a query $Q$ and its corresponding answer $O=[Q]_G$ in a knowledge graph $G$, deductive reasoning aims to derive $O$ from $Q$ by leveraging the structured knowledge in $G$, whereas abductive reasoning seeks to infer the most plausible $Q$ that explains an observed $O$. From a high perspective, these two reasoning processes can be framed as conditional probability distributions: deductive reasoning corresponds to $p_G(O| Q)$, and abductive reasoning corresponds to $p_G(Q | O)$. By training a model $\theta$ to capture the intrinsic knowledge of $G$, we simplify these distributions to $p_{\theta}(O \mid Q)$ and $p_{\theta}(Q \mid O)$. This formulation naturally suggests modeling the joint distribution $p_{\theta}(O, Q)$ of queries and answers, providing a unified foundation for reasoning in both directions.

Building on this perspective and motivated by recent advances in bidirectional masked diffusion language models~\cite{llada,dream7b}, we propose a masked diffusion framework that treats a logical query $Q$ and its corresponding conclusion $O$ as a single sequence. By modeling the sequence in both directions, the diffusion framework can capture their joint distribution $p(Q,O)$, naturally enabling both deductive and abductive reasoning within a unified approach.

We now state the main theorem that establishes the generalization bound of applying masked diffusion models to knowledge graph reasoning. The corresponding proofs and details are shown in Appendix~\ref{app:proof}.

\begin{theorem}[Reasoning Convergence of the Masked Diffusion Model]
\label{theorem:reasoning}
Consider a knowledge graph $G$ with its underlying ground-truth graph $\bar{G}$. 
Let $C$ denote the known condition, where $C$ corresponds to the query $Q$ in the deductive setting and to the observation $O$ in the abductive setting. 
Then, under the Assumption~\ref{assump:init},~\ref{assump:earlystop},~\ref{assump:scoreestimate} and ~\ref{assump:bounded}, the reasoning error is bounded as follows:
\begin{equation*}
\begin{aligned}
 E_{C \sim p(C)}\left[D_{\mathrm{KL}}[p_{\theta}(x \mid C)\,\|\,p_{\bar{G}}(x \mid C)]
 \right]\;\le\;&\\ L e^{-T} + \varepsilon_{\mathrm{score}} 
 + L\bigl(T+\log(M\delta^{-1})\bigr)\frac{(T+\log\delta^{-1})^2}{N}
 + \varepsilon_{\mathrm{opa}}.
\end{aligned}
\end{equation*}
where $L$ is the length of the answer $x$, $N$ is the time slice and $T$ is the timestep during denoising. $M$ is a positive number and $\delta$ is a positive number close to 0. 
\end{theorem}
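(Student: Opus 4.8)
The plan is to adapt the convergence analysis of continuous-time absorbing-state (masked) discrete diffusion models to the \emph{conditional} setting, and then to append a term that accounts for the open-world gap between the training graph $G$ and the ground-truth graph $\bar G$. Since the deductive case ($C=Q$) and the abductive case ($C=O$) are both obtained from the same joint diffusion model by conditioning on a different part of the sequence, it suffices to prove the bound for a generic condition $C$ and to take the expectation over $C\sim p(C)$ at the end. Fixing $C$, the forward masking process acts independently on the $L$ tokens of the target $x$: a token stays unmasked with probability $e^{-t}$ at time $t$ and otherwise sits in the absorbing $[\mathrm{MASK}]$ state, so the forward marginal mixes to the all-mask state at an exponential rate. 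The time-reversed process is a CTMC whose jump rates are governed by the concrete score $s_t(\cdot,\cdot\mid C)$, which for masked diffusion collapses to the denoising posterior $\mathbb{E}_{q}[x_0\mid x_t,C]$; the network supplies an estimate $s_\theta$, and the generative sampler runs a time-discretized ($N$-step) reverse CTMC started from the all-mask state at time $T$ and terminated at an early-stopping time $\delta$.

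The first step is a data-processing inequality: $D_{\mathrm{KL}}[p_\theta(x\mid C)\,\|\,p_{\bar G}(x\mid C)]$ is at most the KL divergence between the full path law of the learned reverse sampler and the path law of the true time-reversed forward process taken under $p_{\bar G}(\cdot\mid C)$. By the chain rule for KL on CTMC path spaces (a Girsanov/Dynkin-type identity), this path KL decomposes into four contributions: (i) the terminal mismatch $D_{\mathrm{KL}}[\,\text{all-mask}\,\|\,q_T(\cdot\mid C)\,]$ from initializing at the prior rather than at $q_T$; (ii) the time-integrated score-approximation error $\int_\delta^T \mathbb{E}_{q_t}\!\big[\,\|s_\theta-s_t\|\,\big]\,dt$; (iii) the discretization error from replacing the continuous reverse dynamics on $[\delta,T]$ by $N$ piecewise-constant steps; and (iv) the residual on $[0,\delta]$ removed by early stopping.

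Next I would bound each contribution. For (i), since at time $T$ each of the $L$ independent tokens is unmasked with probability only $e^{-T}$, subadditivity over tokens gives $D_{\mathrm{KL}}[\,\text{all-mask}\,\|\,q_T(\cdot\mid C)\,]\le Le^{-T}$ (the content of Assumption~\ref{assump:init}), which is the first term of the bound. For (ii), the accumulated score error is at most $\varepsilon_{\mathrm{score}}$ by Assumption~\ref{assump:scoreestimate}. For (iii), the key is to take a log-uniform time grid on $[\delta,T]$ so that it is refined precisely where the reverse jump rates --- which blow up like $1/t$ as $t\to 0$ --- are largest; combined with the rate/vocabulary boundedness of Assumption~\ref{assump:bounded} (with $M$ the effective magnitude), the local truncation error at each of the roughly $(T+\log\delta^{-1})$ grid cells is $O\!\big((T+\log\delta^{-1})/N\big)$ per token, and summing over the grid and over the $L$ tokens produces the term $L(T+\log(M\delta^{-1}))\tfrac{(T+\log\delta^{-1})^2}{N}$. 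For (iv), Assumption~\ref{assump:earlystop} ensures the $[0,\delta]$ residual is dominated by the other terms.

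Finally, I would take the expectation over $C\sim p(C)$ (the quantities $\varepsilon_{\mathrm{score}}$ and $\varepsilon_{\mathrm{opa}}$ being understood as $C$-averaged) and add the open-world term: because the denoiser is trained only on the incomplete graph $G\subsetneq\bar G$, there is an irreducible discrepancy between the distribution it can represent and $p_{\bar G}(\cdot\mid C)$, bounded by $\varepsilon_{\mathrm{opa}}$; summing the four pieces gives exactly the claimed bound, with the deductive and abductive statements being the specializations $C=Q$ and $C=O$. The main obstacle is step (iii): obtaining the precise $(T+\log\delta^{-1})^2/N$ polynomial requires the careful non-uniform (log-scaled) discretization, a tight control of the reverse rates near $t=0$ through the early-stopping threshold $\delta$, and the bookkeeping of how $M$ and the sequence length $L$ enter each local error --- everything else follows either from the exponential mixing of the absorbing forward process or is packaged into the stated assumptions.
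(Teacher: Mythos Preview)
Your Girsanov/path-KL decomposition into an initialization term, a score-approximation term, a discretization term, and an early-stopping residual is exactly the skeleton the paper uses (the paper packages these as $\mathcal{L}_{\mathrm{init}}+\mathcal{L}_{\mathrm{score}}+\mathcal{L}_{\mathrm{disc}}$ and cites the absorbing-diffusion convergence analysis for the last piece), so at the level of the diffusion bound you are on the same track.

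There are, however, two structural differences worth noting. First, the paper does not work conditionally: it uses the identity
\[
E_{C\sim p(C)}\bigl[D_{\mathrm{KL}}(p(x\mid C)\,\|\,q(x\mid C))\bigr]=D_{\mathrm{KL}}(p(x,C)\,\|\,q(x,C))
\]
(valid because both sides share the same marginal on $C$) to pass to the \emph{joint} law on the concatenated sequence $X=[x,C]$, and then runs the masked-diffusion analysis once on $X$ with the $C$-coordinates held fixed. Your route of fixing $C$, bounding, and then averaging is equivalent, but the paper's reduction is what lets the existing unconditional convergence lemma be invoked verbatim.

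Second---and this is the one place your plan has a genuine gap---the paper isolates the open-world term \emph{before} any diffusion analysis, writing
\[
D_{\mathrm{KL}}\bigl[p_\theta(x\mid C)\,\|\,p_{\bar G}(x\mid C)\bigr]\le D_{\mathrm{KL}}\bigl[p_\theta(x\mid C)\,\|\,p_{G}(x\mid C)\bigr]+\varepsilon_{\mathrm{opa}},
\]
and then carries out the entire Girsanov argument against the \emph{training} graph $G$. You instead take the path-KL reference to be the reverse process under $p_{\bar G}$, so your score-error term (ii) is the discrepancy between $s_\theta$ and the $\bar G$-score. But Assumption~\ref{assump:scoreestimate} controls the score the network actually learns, i.e.\ the $G$-score; it says nothing about the $\bar G$-score. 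Consequently you cannot both invoke Assumption~\ref{assump:scoreestimate} to bound (ii) by $\varepsilon_{\mathrm{score}}$ \emph{and} tack on $\varepsilon_{\mathrm{opa}}$ as a separate additive term at the end: either the $G$-versus-$\bar G$ gap has to live inside the time-integrated score error (in which case it is not the clean additive KL quantity appearing in the statement), or you must peel off $\varepsilon_{\mathrm{opa}}$ first and run the whole change-of-measure against $p_G$, as the paper does. Reordering those two steps fixes the argument and brings it in line with the paper's proof.
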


\subsection{Self-reflective Masked Diffusion Model}
\label{subsec:supervised}
Given a query $Q$ and its corresponding answer set $O$, we first convert them into token sequences $X_Q = \{x_{q_1}, x_{q_2}, \ldots, x_{q_m}\}$ and $X_O = \{x_{o_1}, x_{o_2}, \ldots, x_{o_n}\}$, where $m$ and $n$ denote the respective lengths of the sequences. We then create a combined input sequence by joining these token sequences with special tokens:
\begin{equation}
X = \left[ \texttt{[BOS]},\ X_Q,\ \texttt{[SEP]},\ X_O,\ \texttt{[EOS]} \right]
\end{equation}

\subsubsection{Forward Process}
In the masked diffusion framework, the forward process is designed to gradually corrupt the input sequence $X$ by transitioning tokens toward a masked state $\mathbf{M}$. Formally, the transition can be defined as:

\begin{equation}
\begin{aligned}
&p_t(X^t|X^0)=\prod_{i=1}^Lp_t(x_i^t|x_i^0)\\&=\prod_{i=1}^L\mathrm{Cat}(x_i^t;\alpha_t\delta(x_i^0)+(1-\alpha_t)\delta(\mathbf{M})),
\end{aligned}
\end{equation}
where $X^0 = X$ denotes the initial state, $t \in [0, 1]$ is the normalized timestep, $\alpha_t$ is a noise schedule controlling the transition rate, and $\delta$ denotes the Dirac measure on a discrete sample.

Here, we employ distinct masking strategies across two training phases to fully leverage the model’s capacity to capture both joint and conditional probability distribution. In the first phase, the entire sequence $X$ is allowed to transition to the masked state $\mathbf{M}$, enabling the model to learn a comprehensive representation of the underlying joint distribution $p_{\theta}(Q,O)$. In the second phase, we restrict masking to either the query component $X_Q$ or the conclusion component $X_O$ at each iteration, encouraging the model to capture the corresponding conditional distributions $p_{\theta}(Q \mid O)$ and $p_{\theta}(O \mid Q)$. By combining these two training stages, the model can effectively capture the intricate relationships between queries and conclusions, thereby enhancing its performance on downstream reasoning tasks.

\subsubsection{Self-reflective Reverse Process}
In masked diffusion models, the reverse process iteratively reconstructs the original sequence from a masked state $X^1$ by recovering the values of masked tokens.  For abductive reasoning, all query tokens $X_Q^1$ within $X^1$ are masked, while the remaining tokens are known. For deductive reasoning, all observation tokens $X_O^1$ within $X^1$ are masked, with the remaining tokens known.  Once a token has been recovered, it remains unchanged throughout the subsequent steps of the denoising process. For any $0 \leq s < t \leq 1$, the reverse process can be formally expressed as $q(X^s|X^t)=\prod_{i=0}q(x_i^s|X^t)$, where each term $q(x_i^s|X^t)$ is further formulated as:
\begin{equation}
\label{eq:reverse}
q(x^{s}_i|X^t)=
\begin{cases}
\mathrm{Cat}(x^{s}_i;\delta(x_i^t)) & x_i^t\neq\mathbf{M} \\
\mathrm{Cat}\left(x^{s}_i;\frac{(1-\alpha_{s})\delta(\mathbf{M})+(\alpha_{s}-\alpha_t)\tilde{x}^{0}_i}{1-\alpha_t}\right) & x_i^t=\mathbf{M}. 
\end{cases}
\end{equation}
where $\tilde{x}_i^0 = f_{\theta}(X^t)$ represents the estimated value of the original token $x_i^0$ produced by the model $f_{\theta}$, conditioned on the current state $X^t$.

To enhance the integration of deductive reasoning within the abductive reasoning process for hypothesis verification and refinement, we propose a self-reflective reverse process. This method leverages deductive reasoning to iteratively refine the initial hypothesis $\widetilde{X}^0$ at regular intervals of every $k$ denoising steps, enhancing the accuracy of the reconstructed sequence.

Specifically, at each reflective step, we generate a set of $p$ candidate hypotheses, denoted as $\{\hat{X}^{(0,1)}, \hat{X}^{(0,2)}, \dots, \hat{X}^{(0,p)}\}$, where $p$ is the number of sampled candidates. 
For each candidate hypothesis $\hat{X}^{(0,i)}$, we extract its query component $\hat{X}^{(0,i)}_Q$ and employ deductive reasoning to derive the corresponding conclusion $\hat{X}^{(0,i)}_O$ according to $\hat{X}^{(0,i)}_Q$.
We then evaluate the conclusions $\hat{X}^{(0,i)}_O$ against the original observations $X_O$ to identify the candidate hypothesis $\hat{X}^{(0,i)}$ whose conclusion most closely matches $X_O$, selecting it as the refined $\tilde{X}^0$ for the current reflective step. This process iterates in subsequent reverse steps. Formally, the self-reflective process is defined as:
\begin{equation}
    \hat{X}_{O}^{0,i} = f_{\theta}(\hat{X}_{Q}^{(0,i)}, \mathbf{M})
\end{equation}
\begin{equation}
\widetilde{X}_{Q}^0 = \arg\max_{\substack{\hat{X}_{Q}^{(0,i)}}} \text{sim}(\hat{X}_{O}^{(0,i)},X_{O}), \quad i\in[1,2,\ldots,p]
\end{equation} 
where $\text{sim}(\cdot,\cdot)$ represents the Jaccard similarity score.

\subsubsection{Training objective}
The training objective in supervised learning is a refined reformulation of the Evidence Lower Bound (ELBO), designed to optimize the mask predictor \( p_\theta(\cdot | X^t) \) by establishing a computationally tractable variational upper bound on the negative log-likelihood. Formally, this is expressed as a weighted cross-entropy loss:
\begin{equation}
    \mathcal{L}_{t}(X^t) = -\mathbb{E}_{p(X^t | X^0)} \left[ w(t) \sum_{n=1} \mathbf{1}_{[x_n^t = \mathbf{M}]} \log p_\theta(x_n^0 | X^t) \right],
\end{equation}
where  $w(t) \in (0,1]$  is a time-dependent weighting factor derived from a transformation of $\alpha_t$ , and  $\mathbf{1}_{[\mathbf{x}_n^t = \mathbf{M}]}$  is an indicator function that equals 1 if the \( n \)-th token at time \( t \) matches the mask \( \mathbf{M} \), and 0 otherwise.

For the denoising model, we adopt a bidirectional transformer architecture~\cite{dream7b,attention}. A key modification is the removal of the causal mask typically employed in autoregressive models, which allows the model to perform full bidirectional attention and capture contextual information from all tokens in the sequence. Furthermore, following the settings of \cite{dream7b,llada}, we configure the padding token to be the same as EOS token, which enables the model to flexibly generate sequences of variable lengths.

\subsection{Logic-exploration Reinforcement Learning}
\label{subsec:rl}
To explore diverse logic combinations and enhance the understanding of underlying logical structures, we propose a novel logic-exploration reinforcement learning method.
In contrast to conventional reinforement fine-tuning~\cite{akgr} on isolated reasoning tasks, our approach treats logical queries $Q$ and their corresponding conclusions $O$ as an integrated whole.
The model will be encouraged to generate diverse but logically consistent query-conclusion pairs to discover richer logical relationships.
Specifically, the reward function is defined using the Jaccard score, which encourages the model to improve its capability of logical consistency:

\begin{equation}
\begin{aligned}
R([Q]_G, O) &=  \text{Jaccard}([Q]_G, O) =  \frac{|[Q]_G \cap O|}{|[Q]_G \cup O|} ,
\end{aligned}
\end{equation}
where $G$ denotes the observable knowledge graph during training, which serves as a reliable and leakage-free proxy for evaluating reasoning quality.

However, directly generating query–conclusion pairs from sequences of fully masked states is highly challenging, as it may lead the model into a zero advantage dilemma~\cite{zero-delemma}, making it difficult to obtain meaningful exploration outcomes. Moreover, the model can also suffer from a mode-collapse phenomenon, where it repeatedly produces a single query–conclusion pair merely to secure a high reward score. 
To ensure stable training, we randomly mask a subset of tokens from a given query–conclusion pair and use this partially masked instance as the start for denoising, rather than a fully masked state. These partial prompts provide the model with informative guidance during generation, thereby facilitating the production of logically consistent query–conclusion pairs. Furthermore, we incorporate specific task reasoning cases throughout training to further diversify the learning process. This design mitigates the risk of mode collapse, preserves the model’s ability to explore a broader logical space within the knowledge graph, and enhances the effectiveness of downstream reasoning tasks.

To address the inefficiency of conventional Monte Carlo sampling in estimating policy log-probabilities for diffusion models, we adopt the coupled-GRPO~\cite{coupledgrpo} framework. The key component is the precise estimation of token-level log-probabilities.
For each generated query-conclusion pair $(\bar{Q},\bar{O})$, we select $\lambda$  timestep pairs $(t,\hat{t})$ with $t+\hat{t}=1$, and generate two complementary completion masks for each pair. 
In addition, we also incorporate the full completion mask as a reference. 
The log-probability of each token \((q, o)\)  is computed as the average over \(\lambda + 1\) completions:
\begin{equation}
\begin{aligned}
&\log\pi_{\theta}(q,o|q^{t<1},o^{t<1})=\\&\frac{1}{\lambda+1}\left[\sum_{t+\hat{t}=1}^{\lambda}[\mathcal{L}_{t}(q^{t},o^t)+\mathcal{L}_{\hat{t}}(q^{\hat{t}},o^{\hat{t}})+\mathcal{L}_{1}(q^{1},o^1)]\right]
\end{aligned}
\end{equation}

Building upon this estimation, the policy model  $\pi_\theta$ is optimized by maximizing the expected reward over a group of generated query-conclusion pairs $\{(\bar{Q}_1,\bar{O}_1),\ldots,(\bar{Q}_k,\bar{O}_k)\}$, where $k$ is the group size. 
The objective function is defined as:
\begin{equation}
\begin{aligned}
\mathcal{L}_{\mathrm{GRPO}}(\theta) = \mathbb{E}&\{\frac{1}{k}\sum_{i=1}^{k}\frac{1}{|\bar{Q}_i,\bar{O}_{i}|}\sum_{j=1}^{|\bar{Q}_i,\bar{O}_{i}|}\frac{\pi_\theta(q_{ij},o_{ij}|q_{ij}^{t<1},o_{ij}^{t<1})}{\pi_{\theta_\mathrm{old}}(q_{ij},o_{ij}|q_{ij}^{t<1},o_{ij}^{t<1})}\hat{R}'_i
 \\&-\beta D_{\mathrm{KL}}\left[\pi_{\theta}\|\pi_{\mathrm{ref}}\right]\}
\end{aligned}
\end{equation}
where $\hat{R}'_i$ denotes the normalized reward within each group. 
The KL regularization term prevents $\pi_{\theta}$ from deviating excessively from the reference policy $\pi_{\mathrm{ref}}$, with $\beta$ controlling its strength, while gradient clipping is applied to stabilize optimization.

\section{Experiment}
\subsection{Experiment Settings}
\label{sec:expsetting}
\textbf{Dataset}. We evaluate our approach on three widely used knowledge graph datasets: DBpedia50~\cite{dbpedia}, WN18RR~\cite{wn18rr}, and FB15k-237~\cite{fb15k}. Following the common settings in knowledge graph reasoning~\cite{akgr,multihop,query2box}, each dataset is divided into training, validation, and test sets with an 8:1:1 ratio. Under the open-world assumption~\cite{openworld}, we construct the training, validation, and test knowledge graphs, denoted as $G_{\text{train}}$, $G_{\text{valid}}$, and $G_{\text{test}}$, in an incremental manner, such that each subsequent graph encompasses all edges present in its predecessor. This ensures that $G_{\text{valid}}$ contains all edges from $G_{\text{train}}$ and $G_{\text{test}}$ contains all edges from both $G_{\text{train}}$ and $G_{\text{valid}}$.
For query-conclusion pair sampling, we follow prior KG reasoning work~\cite{akgr,query2box} and adopt the 13 predefined logical patterns illustrated in Fig.~\ref{fig:logic_pattern}. Each observed conclusion contains no more than 32 entities. To evaluate generalization, the validation and test sets include entities not seen during training, with the test set covering a larger portion of unseen entities.

\begin{figure}  
 \includegraphics[width=0.9\linewidth]{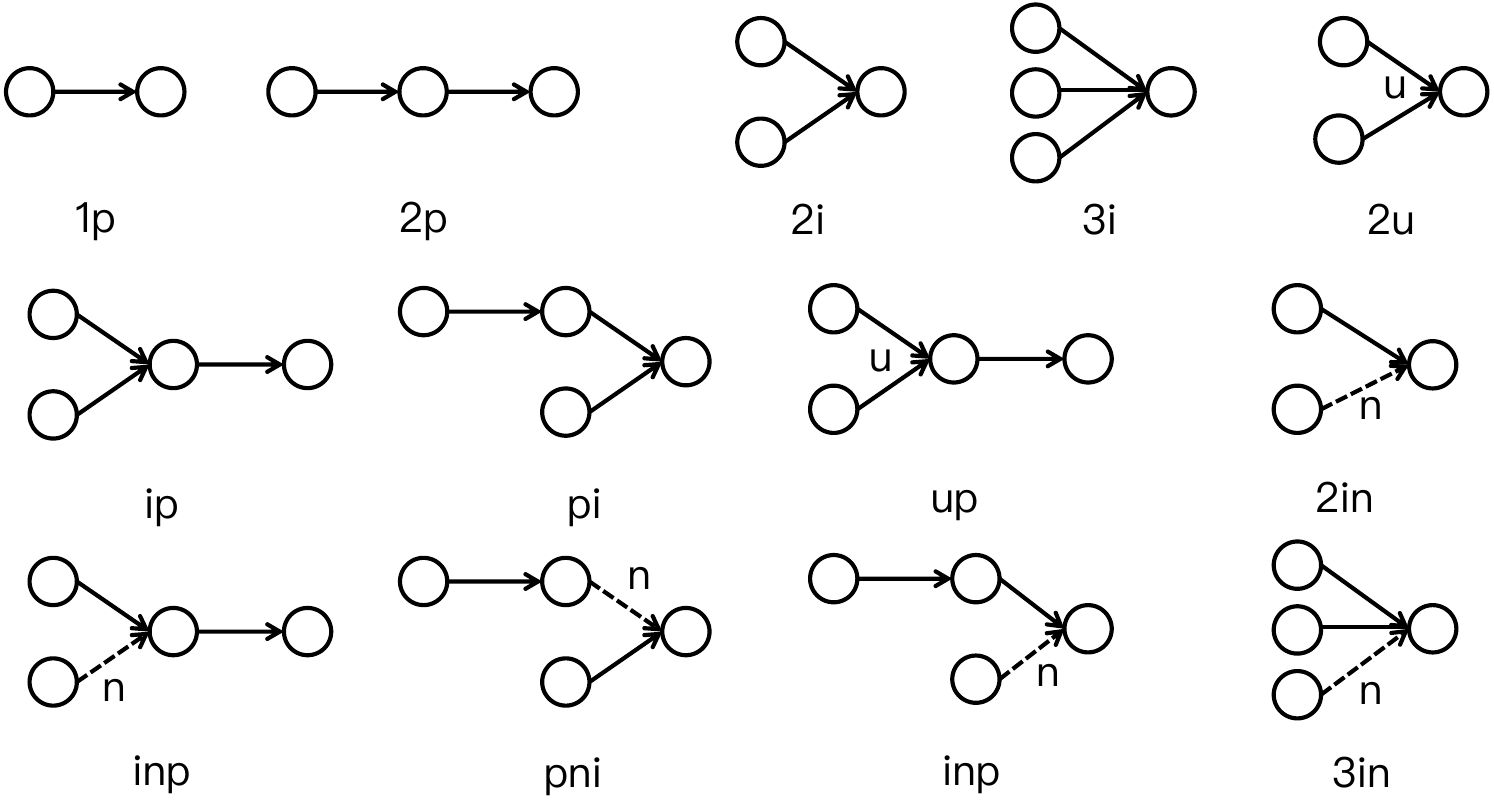}
    \caption{Thirteen predefined logical types.}
    \label{fig:logic_pattern}
\end{figure}

\begin{table*}[h]
\centering
\caption{The average Jaccard score of abductive reasoning among three datasets. (\textbf{Bold}: best; \underline{Underline}: runner-up.)}
\resizebox{\textwidth}{!}{%
\begin{tabular}{l|l|ccccccccccccc|c}
\toprule
Data & Model & 1p & 2p & 2i & 3i & ip & pi & 2u & up & 2in & 3in & pni & pin & inp & ave \\
\midrule
\multirow{5}{*}{FB15k-237} 
 & GPT-4o + 2-hop subgraph &11.2& 5.1&11.6&3.2&2.3&3.0&8.9&1.5&0.7&1.2&1.3&0.3&1.4&4.0 \\
 & Kimi K2 + 2-hop subgraph &6.1 &3.2&7.6&2.2&4.2&2.8&6.0&1.1&2.5&0.9&3.5&1.9&3.0&3.7  \\
 & DeepSeek-V3 + 2-hop subgraph &9.9 &2.4&10.2&2.0&0.9&1.3&8.1&6.6&4.4&0.4&0.5&2.4&1.5&3.9 \\
 & AbductiveKGR & \underline{78.9}&\underline{68.1}&\textbf{65.6}&\textbf{60.5}&\underline{68.3}&\textbf{60.0}&\underline{81.7}&\underline{67.2}&\underline{67.2}&\underline{56.0}&\underline{62.7}&\underline{59.6}&\underline{62.6}&\underline{66.0} \\
\cmidrule{2-16}
 & \modelname 
&\textbf{80.0}&\textbf{72.9}&\underline{65.5}&\underline{52.4}&\textbf{76.5}&\underline{56.9}&\textbf{86.7}&\textbf{69.3}&\textbf{70.0}&\textbf{59.5}&\textbf{62.9}&\textbf{60.2}&\textbf{63.0}&\textbf{67.2} \\

\midrule
\multirow{5}{*}{WN18RR} 
 & GPT-4o + 2-hop subgraph & 8.4 &7.9&14.3&2.9&7.1&7.3&9.3&5.1&1.4&0.8&9.1&7.8&8.2&6.9  \\
 & Kimi K2 + 2-hop subgraph &21.6 &4.6&20.3&9.3&4.5&7.5&10.6&1.9&3.4&1.3&1.7&0.6&2.7&6.2\\
 & DeepSeek-V3 + 2-hop subgraph&11.5 &7.3&14.1&3.2&7.1&2.2&11.4&2.3&0.4&1.4&3.1&2.3&3.8&5.4  \\
 & AbductiveKGR &\textbf{82.1}&\underline{76.0}&\textbf{69.4}&\textbf{69.3}&\textbf{82.7}&\textbf{65.6}&\underline{77.0}&\underline{68.0}&\underline{71.7}&\underline{70.4}&\textbf{72.0}&\underline{67.6}&\underline{72.1}&\underline{72.6}\\
\cmidrule{2-16}
 & \modelname & \underline{72.3}&\textbf{79.8}&\underline{65.1}&\underline{65.7}&\underline{68.7}&\textbf{65.6}&\textbf{84.0}&\textbf{73.2}&\textbf{75.8}&\textbf{70.5}&\underline{66.9}&\textbf{73.0}&\textbf{81.4}&\textbf{73.6}\\
 \midrule
\multirow{5}{*}{DBpedia50} 
 & GPT-4o + 2-hop subgraph &7.5 &0.6&12.3&4.0&2.7&3.1&7.8&3.2&1.0&0.8&2.3&0.7&0.2&3.6  \\
 & Kimi K2 + 2-hop subgraph &5.2&6.5&5.3&3.0&6.0&3.5&3.1&2.9&1.8&3.9&1.9&1.4&2.3&3.6  \\
 & DeepSeek-V3 + 2-hop subgraph &5.3&3.3&6.3&3.1&1.7&0.8&10.0&0.4&2.3&0.1&0.3&1.2&2.8&2.9 \\
 & AbductiveKGR & 
\underline{77.7}&\underline{70.1}&\underline{47.0}&\underline{47.5}&\underline{82.1}&\underline{53.4}&\underline{64.6}&\underline{70.2}&\underline{62.6}&\underline{57.5}&\underline{69.6}&\underline{62.6}&\underline{71.3}&\underline{64.3} \\
\cmidrule{2-16}
 & \modelname & \textbf{83.7}&\textbf{81.7}&\textbf{68.6}&\textbf{57.9}&\textbf{83.8}&\textbf{65.4}&\textbf{78.6}&\textbf{74.5}&\textbf{71.7}&\textbf{62.3}&\textbf{78.3}&\textbf{64.3}&\textbf{71.4}&\textbf{72.5}\\
\bottomrule
\end{tabular}%
}
\label{table:Jaccard}
\end{table*}


\textbf{Implementation Details}\footnote{Our code is available at https://github.com/HKUST-KnowComp/DARK.}. We implement the generative model using a transformer architecture~\cite{dream7b} with 10 layers, a hidden size of 768, and 8 attention heads. For supervised learning, the model is optimized with AdamW using a learning rate of 1e-4, a weight decay of 1e-6, and a warmup of 25 epochs. Training is conducted in two stages: unified training for 200 epochs, followed by training for each downstream task for another 200 epochs. For the reinforcement learning stage, we train the model with a learning rate of 1e-5, a weight decay of 1e-6, and 10 epochs per dataset. All experiments are performed on 4 Nvidia A6000 48GB GPUs, and the reported results are averaged over five independent runs.

\subsection{Abductive Reasoning}

\textbf{Baslines and Metrics}. For abductive reasoning task in knowledge graph, we adopt AbductiveKGR~\cite{akgr} as the baseline, which formulates hypothesis generation in knowledge graphs as an autoregressive generation paradigm. 
We additionally evaluate several advanced LLMs, including GPT-4o~\cite{gpt4}, Kimi K2~\citep{kimik2}, and Deepseek-V3~\citep{deepseek} as the baselines. To address their limited awareness of KG structures, we incorporate 2-hop subgraphs of the observation entities, represented in triple form with the semantic information, into the prompts. 
The quality of the generated hypothesis is predominantly evaluated using the Jaccard score, consistent with its application in abductive reasoning as outlined in Section~\ref{sec:preliminary}. We consider the constructed test graph ${G}_{\text{test}}$ as the latent graph. It is noteworthy that ${G}_{\text{test}}$ incorporates ten percent of edges that were unobserved during the training and validation phases. Formally, given an observation $O$ and a generated hypothesis $H$, we utilize a graph search algorithm to derive the conclusion of $H$ on $G_{\text{test}}$, denoted as $[ H ]_{G_{\text{test}}}$. The Jaccard score is then computed based on this conclusion. The results have been reported in Table~\ref{table:Jaccard}.

First, DARK consistently outperforms AbductiveKGR across all three datasets, establishing new state-of-the-art results. This advantage is particularly pronounced in cases involving negation and logical disjunction, demonstrating the model’s strong capability in understanding and handling complex logical relationships. Beyond this comparison, we also observed that general large language models, when not specifically adapted for the task, fail to achieve competitive performance. We attribute this to three main factors: (i) a limited inherent ability to reason over structured data; (ii) the large size of two-hop subgraphs extracted from observed entities, which leads to long prompts that challenge the models’ capacity to process extended text; and (iii) potential conflicts between the semantic priors embedded in the LLMs and the knowledge graph itself, resulting in misinterpretations. Notably, while large language models can handle relatively simple logical patterns (e.g., 1p, 2i, 2u), their performance degrades significantly as reasoning complexity increases, especially when negation is required to narrow down candidate entities.

\begin{table*}[ht]
\centering
\caption{ The average results of deductive reasoning among three datasets. (\textbf{Bold}: best; \underline{Underline}: runner-up.)}
\begin{tabular}{l|cccc|cccc|cccc}
\toprule
\multirow{2}{*}{\textbf{Model}}  & \multicolumn{4}{c|}{\textbf{FB15k-237}} 
 & \multicolumn{4}{c|}{WN18RR} & \multicolumn{4}{c}{DBpedia50}\\
\cmidrule(lr){2-5} \cmidrule(lr){6-9}\cmidrule(lr){10-13}
 & MRR & H@1 & H@3 & H@10 & MRR & H@1 & H@3 & H@10 & MRR & H@1 & H@3 & H@10 \\
\midrule
BetaE & 24.4 & 13.6 & 27.8 & 47.3 & 25.6 & 20.1 & 29.3 & 35.2 & 22.2 & 18.2 & 23.6 & 30.1  \\
FuzzyQE & 26.6 & 15.3 & 30.4 & 50.4 & 29.8 & 25.0 & 32.6 & 38.4 & 22.3 & 19.2 & 24.5 & 30.9 \\
ULTRAQUERY &\textbf{43.7}  & \textbf{38.3}&\underline{46.5}  &\textbf{54.2}  &26.5 &20.1  &28.3 & 35.6 & 32.1 &30.0  &32.7  &36.8  \\
Query2particle& 34.8& 26.2 & 37.4 & 52.3 & \underline{34.1} &\underline{42.2} &\underline{44.5}& \textbf{47.0}&\underline{35.1} & \underline{33.9} & \underline{35.5} & \underline{39.8} \\
\midrule 
~\modelname& \underline{42.0} & \underline{36.5} & \textbf{47.8} & \underline{53.6} & \textbf{38.7} & \textbf{42.3} & \textbf{45.4} & \underline{46.0} & \textbf{38.9} & \textbf{38.1}& \textbf{42.2} & \textbf{43.6} \\
\bottomrule
\end{tabular}
\label{table:deduction}
\end{table*}

\subsection{Deductive Reasoning}

\textbf{Baselines and Metrics}. For deductive reasoning, we adopt the complex query answering (CQA) task as our evaluation benchmark. We compare our model with several representative state-of-the-art approaches: BetaE~\cite{betae}, a probabilistic embedding model that uses Beta distributions to represent entities and queries, supporting logical operators while modeling uncertainty. FuzzyQE~\cite{fuzzy}, a query embedding framework based on fuzzy logic, which defines logical operators in a principled, learning-free manner for answering first-order logic queries. Query2Particle~\cite{query2particles}, which encodes each query as a set of particle embeddings, enabling reasoning over arbitrary first-order logic queries by retrieving answers from diverse regions of the embedding space. 
UlTRAQUERY~\cite{ultra}, a foundation model based on GNNs for CQA that captures the four fundamental logical relationships, thereby enabling zero-shot query answering across knowledge graphs.
To assess performance on CQA tasks, we employ standard ranking-based metrics: Mean Reciprocal Rank (MRR), which captures the average reciprocal rank of the correct entity, reflecting how highly the model ranks the true answer;  and Hits@$k$, which measures the proportion of queries where the correct answer appears in the top-$k$ predictions, indicating the model’s effectiveness in retrieving correct entities among its most confident outputs.   Here, since our model cannot directly assign a probability to each entity, we set the probability of the entity answered by the model to 1 and the probability of the others that were not answered to 0.  The results have been reported in Table~\ref{table:deduction}.

\begin{figure*}[t]
    \centering
    \begin{minipage}{0.48\textwidth}
        \centering
        \includegraphics[width=\linewidth]{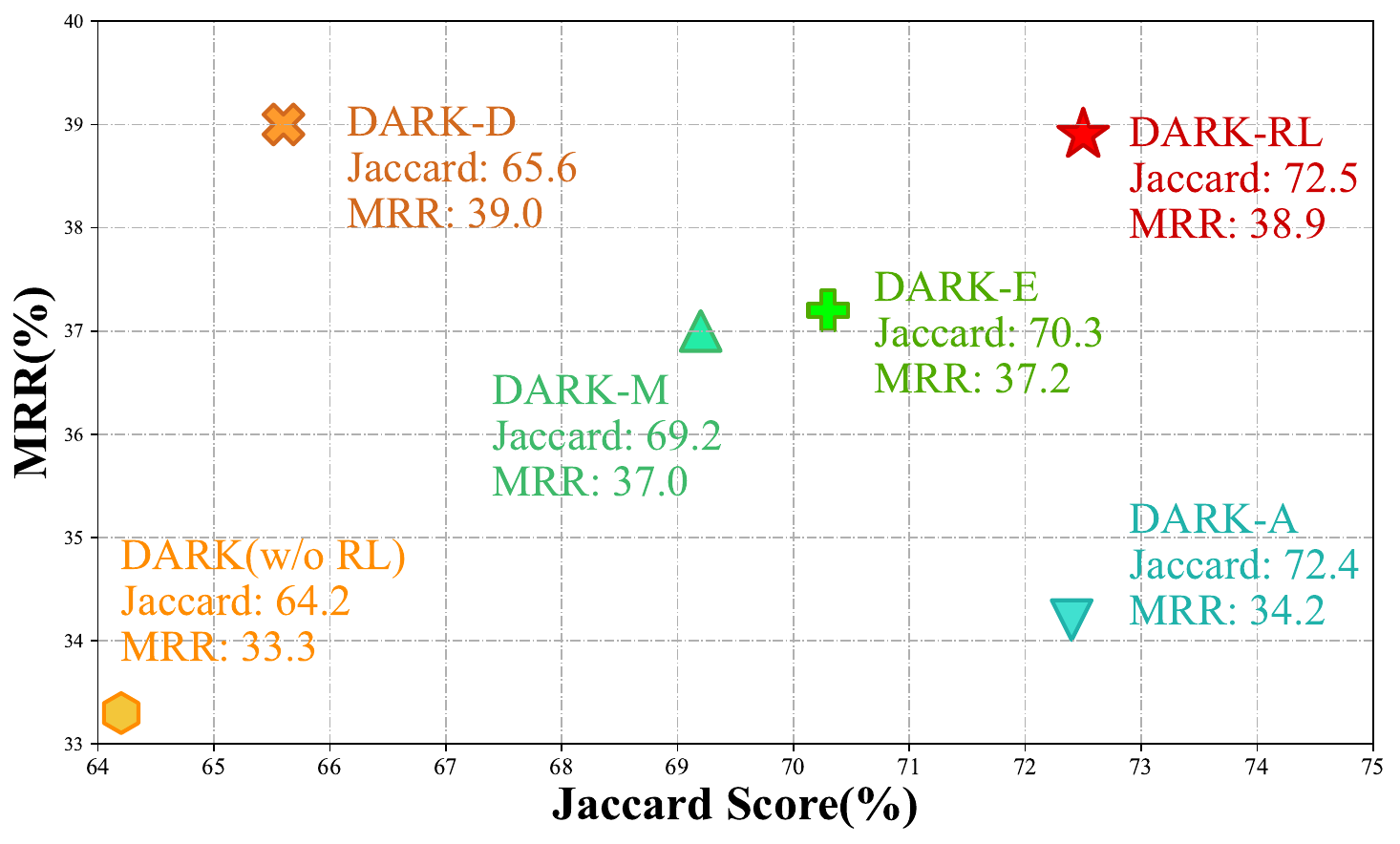}
        \caption{Ablation study for different RL strategy.}
        \label{fig:ablation}
    \end{minipage}
    \hfill
    \begin{minipage}{0.48\textwidth}
        \centering
        \includegraphics[width=\linewidth]{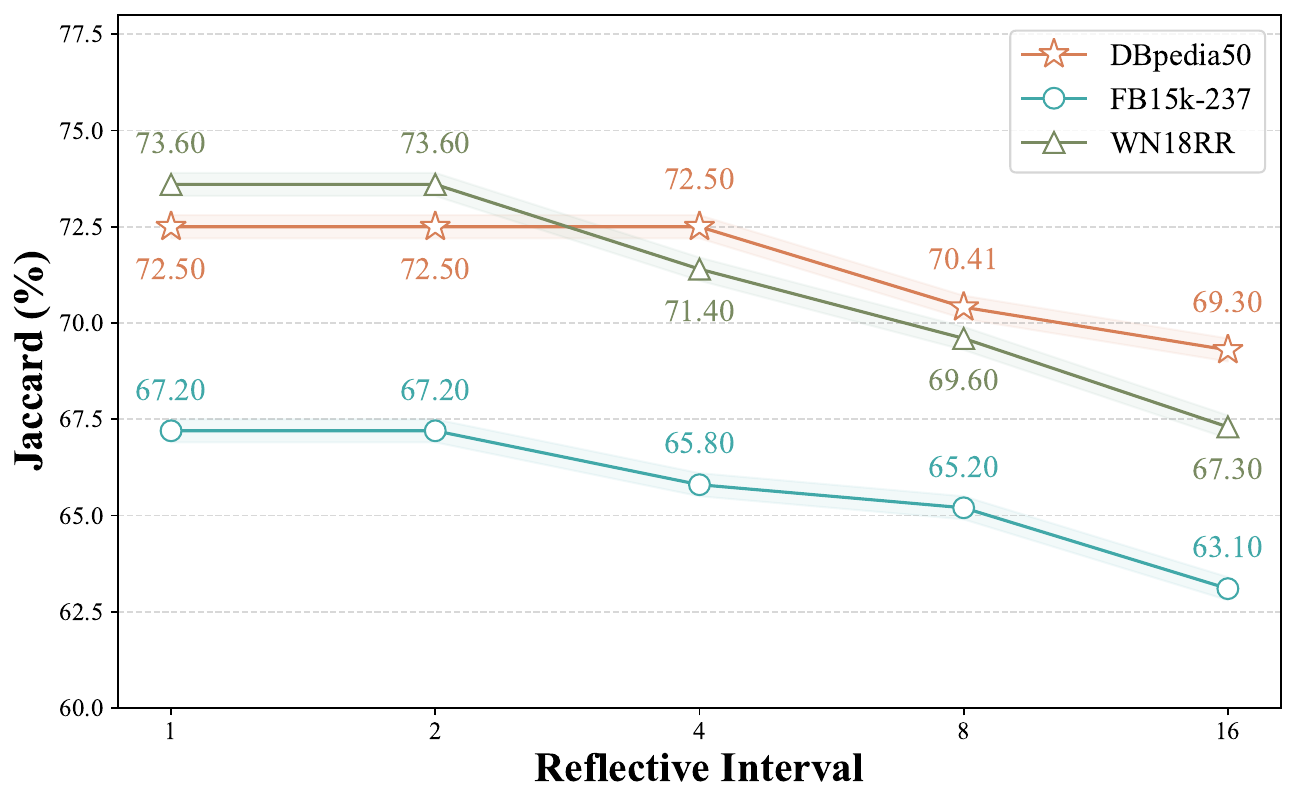}
        \caption{Sensitivity for the reflective interval $k$.}
        \label{fig:sensitivity}
    \end{minipage}
\end{figure*}

Compared with the state-of-the-art methods specifically designed for complex query answering, DARK still demonstrates competitive performance. Notably, the improvement on the DBpedia50 dataset is significant. Furthermore, our method exhibits a clear advantage on the Hits@3 metric, suggesting that the model is effective at generating multiple accurate candidate answers. However, the improvement from Hits@3 to Hits@10 is relatively modest, which we attribute to a limitation of the mask diffusion model in variable-length generation: the model may produce a large number of end-of-sequence tokens prematurely, thereby restricting the generation of additional answer entities.

\subsection{Ablation Study}
In this section, we investigate the effect of different reinforcement learning strategies on model performance. Specifically, we consider several settings: (i) DARK(w/o RL): removing RL entirely; (ii) DARK-A: applying RL only to abductive reasoning; (iii) DARK-D: applying RL only to deductive reasoning; (iv) DARK-M: jointly applying RL to both reasoning tasks in a mixture setting; (v) DARK-E: employing RL with logical exploration; and (vi) DARK-RL: applying RL with both logical exploration and task-specific fine-tuning. To assess the impact of these strategies, we evaluate the Jaccard score and MRR on the DBpedia50 dataset, providing insight into how each configuration influences both abductive and deductive reasoning. All reinforcement learning experiments are uniformly trained for 10 epochs. The results are shown in Fig.~\ref{fig:ablation}.

First, compared with purely supervised learning, incorporating any reinforcement learning (RL) strategy consistently improves performance. Interestingly, applying RL to a single reasoning task also leads to modest gains in the other, suggesting a synergistic relationship between abductive and deductive reasoning: strengthening one type of reasoning implicitly benefits the other. Moreover, introducing logical exploration further enhances the performance of both tasks simultaneously. Finally, unifying task-specific RL with logical exploration yields the most substantial improvements, underscoring their complementary roles in advancing reasoning capability.

\subsection{Sensitivity Study}
Here, we analyze the sensitivity of the reflection interval $k$ in the self-reflective denoising process across three datasets. We evaluate intervals of 1, 2, 4, 8, and 16. The total denoising steps are fixed at 64, corresponding to 64, 32, 16, 8, and 2 reflections, respectively, during sampling. The candidate hypothesis number is 4 for all settings. Results are shown in Fig.~\ref{fig:sensitivity}.

The results show that shorter reflection intervals generally lead to stronger model performance, suggesting that the reflection mechanism effectively guides the model toward selecting more accurate hypotheses for subsequent denoising steps. Furthermore, the performance difference between reflection intervals of 1 and 2 is negligible, and the gap between intervals of 2 and 4 remains relatively small. This suggests that moderately reducing the frequency of reflection does not cause the model to drift from the correct reasoning trajectory. These findings highlight a practical trade-off between efficiency and effectiveness: by appropriately tuning the reflection interval, one can balance computational efficiency with high-quality reflection sampling.

\section{Conclusion}


In this work, we propose DARK, a masked diffusion model that unifies deductive and abductive reasoning over knowledge graphs. By formulating bidirectional reasoning as a masked diffusion process, DARK enables deduction and abduction to interact and mutually refine each other during iterative denoising. The self-reflective denoising mechanism supports iterative hypothesis refinement via deductive validation, while a logic-exploration reinforcement learning strategy promotes more diverse reasoning patterns through joint masking of queries and conclusions. Extensive experiments on multiple benchmarks demonstrate that DARK achieves strong performance and generalization across diverse reasoning scenarios. Overall, this work shows the effectiveness of diffusion-based modeling for logical reasoning, providing a unified and effective framework for knowledge graph reasoning.

However, the main limitation of our approach lies in its reliance on the ids of knowledge graph entities and relationships, lacking more flexible natural language question-answering capabilities. Furthermore, this method relies on a large number of hypothesis-observation pairs for sampling, as well as a long training time. This limits the model's ability to transfer knowledge across different knowledge graphs and its scalability. 

\section{Acknowledgements}
The corresponding author is Yangqiu Song. We owe sincerely thanks to all authors for their valuable efforts and contributions. 
The authors of this paper are supported by the ITSP Platform Research Project (ITS/189/23FP) from ITC of Hong Kong, SAR, China, and the AoE (AoE/E-601/24-N), the RIF (R6021-20) and the GRF (16205322) from RGC of Hong Kong, SAR, China and the National Natural Science Foundation of China (No.62462007 and No.62302023).

\newpage
\bibliographystyle{ACM-Reference-Format}
\balance
\bibliography{ref}

\appendix
\counterwithin{table}{section}
\counterwithin{figure}{section}
\counterwithin{equation}{section}

\section{Proof}
\label{app:proof}
In this section, we support the proof of Theorem~\ref{theorem:reasoning}. We first restate the Theorem here.
\begin{theorem}[Reasoning Convergence of the Masked Diffusion Model]
Consider a knowledge graph $G$ with its underlying ground-truth graph $\bar{G}$. 
Let $C$ denote the known condition, where $C$ corresponds to the query $Q$ in the deductive setting and to the observation $O$ in the abductive setting. 
Then, under the Assumption~\ref{assump:init},~\ref{assump:earlystop},~\ref{assump:scoreestimate} and ~\ref{assump:bounded}, the reasoning error is bounded as follows:
\begin{equation*}
\begin{aligned}
 E_{C \sim p(C)}\left[D_{\mathrm{KL}}[p_{\theta}(x \mid C)\,\|\,p_{\bar{G}}(x \mid C)]
 \right]\;\le\;&\\ L e^{-T} + \varepsilon_{\mathrm{score}} 
 + L\bigl(T+\log(M\delta^{-1})\bigr)\frac{(T+\log\delta^{-1})^2}{N}
 + \varepsilon_{\mathrm{opa}}.
\end{aligned}
\end{equation*}
where $L$ is the length of answer $x$, $N$ is the time slice and $T$ is the timestep during denoising. $M$ is a positive number and $\delta$ is a positive number close to 0. 
\end{theorem}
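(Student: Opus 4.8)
The plan is to decompose the target KL divergence along three conceptually distinct sources of error — the imperfect initialization of the reverse process, the discretization of the continuous-time denoising into $N$ time slices, and the statistical error of the learned score (mask predictor) — and then fold in the open-world approximation gap $\varepsilon_{\mathrm{opa}}$ that separates reasoning against the observed graph $G$ from reasoning against the latent graph $\bar G$. First I would invoke the standard chain-rule / data-processing argument for diffusion models: write $D_{\mathrm{KL}}[p_\theta(x\mid C)\,\|\,p_{\bar G}(x\mid C)]$ as a telescoping sum over the reverse transitions $q(X^s\mid X^t)$ from Eq.~\eqref{eq:reverse}, so that the total divergence is controlled by (i) the KL between the terminal masked distribution used to seed sampling and the true $t=T$ marginal, and (ii) a sum of per-step KL terms between the model's reverse kernel and the true posterior. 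Assumption~\ref{assump:init} bounds (i); since each token is independently mask-initialized and the forward noise schedule drives $\alpha_t\to$ its limit geometrically, this contributes the $L e^{-T}$ term (one $e^{-T}$ per token, $L$ tokens).

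Next I would handle the per-step terms. Using Assumption~\ref{assump:scoreestimate} — that the mask predictor $f_\theta$ approximates the true denoising posterior in expectation with error $\varepsilon_{\mathrm{score}}$ — the aggregate contribution of the score-estimation error across all reverse steps collapses to the single additive term $\varepsilon_{\mathrm{score}}$, because the ELBO-style training objective $\mathcal{L}_t$ is exactly the (weighted) sum of these per-step KLs. The discretization term is the delicate one: replacing the continuous reverse SDE/CTMC by $N$ discrete jumps introduces a local error at each slice that, after summing and using Assumption~\ref{assump:earlystop} to truncate the singular behavior near $t=0$ at level $\delta$, scales like $L\,(T+\log(M\delta^{-1}))\,(T+\log\delta^{-1})^2/N$; here the polylogarithmic factors come from bounding the second moment of the jump rate on the early-stopped interval and $M$ is the constant controlling the rate's growth (from Assumption~\ref{assump:bounded}). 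Finally, a triangle-inequality step passing from $p_G$ to $p_{\bar G}$ — justified because $G\subsetneq\bar G$ and the missing triples form a controlled fraction under the open-world assumption — introduces $\varepsilon_{\mathrm{opa}}$, completing the bound.

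The main obstacle I anticipate is the discretization analysis: in the masked-diffusion (discrete-state) setting the reverse process is a continuous-time Markov jump process whose rates blow up as $t\to 0$ (the $1/(1-\alpha_t)$ factor visible in Eq.~\eqref{eq:reverse}), so a naive uniform-grid argument diverges. The trick will be to combine the early-stopping of Assumption~\ref{assump:earlystop} with a Girsanov-type or $\chi^2$-to-KL change-of-measure estimate on the truncated interval $[\delta, T]$, carefully tracking that the accumulated jump intensity over this interval is $O(T+\log\delta^{-1})$ and its square appears in the second-order Taylor remainder of the per-step KL, which is where the $(T+\log\delta^{-1})^2/N$ shape originates. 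A secondary subtlety is that the $L$ tokens are not independent under the true posterior, so the per-token bounds must be combined via subadditivity of KL over the factorized reverse kernel rather than a product argument; this is exactly why $L$ appears as a linear prefactor on three of the four terms rather than, say, exponentially.
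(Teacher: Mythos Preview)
Your proposal is essentially the same as the paper's: both split off $\varepsilon_{\mathrm{opa}}$ via a triangle-type step on KL, then decompose the remaining divergence into an initialization term ($Le^{-T}$), a score-estimation term ($\varepsilon_{\mathrm{score}}$), and a discretization term, with the latter handled by a Girsanov change-of-measure on the early-stopped interval $[\delta,T]$. The one technical step you gloss over that the paper makes explicit is the reduction $E_{C\sim p(C)}[D_{\mathrm{KL}}(p_\theta(x\mid C)\,\|\,p_G(x\mid C))] = D_{\mathrm{KL}}(p_\theta(x,C)\,\|\,p_G(x,C))$, valid because the marginal on $C$ is shared; this lets the paper work with the joint sequence $X=[x,C]$ and invoke the unconditional masked-diffusion convergence result directly, rather than tracking the conditioning throughout as you implicitly do.
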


\begin{proof}

For a more convenient representation, we define $q$ as the forward transition process, $Q$ as the state probability transition matrix, and $\bar{q}$ as the reverse process.

First, by the triangle inequality of the KL divergence, we can obtain
\begin{equation}
\begin{aligned}
&\quad D_{\mathrm{KL}}\!\left[p_{\theta}(x \mid C)\,\|\,p_{\bar{G}}(x \mid C)\right]\\
&\le D_{\mathrm{KL}}\!\left[p_{\theta}(x \mid C)\,\|\,p_G(x \mid C)\right]
    + D_{\mathrm{KL}}\!\left[p_G(x \mid C)\,\|\,p_{\bar{G}}(x \mid C)\right] \\
&= D_{\mathrm{KL}}\!\left[p_{\theta}(x \mid C)\,\|\,p_G(x \mid C)\right] + \varepsilon_{\mathrm{opa}}.
\end{aligned}
\end{equation}
This result indicates that the prediction error can be decomposed into two parts: 
the error of the diffusion model itself and the additional error $\varepsilon_{\mathrm{opa}}$ introduced by the incompleteness of the knowledge graph under the open-world assumption.

Next, we derive the error $E_{C \sim p(C)}\left[D_{\mathrm{KL}}[p_{\theta}(x \mid C)\,\|\,p_G(x \mid C)]\right]$ of the diffusion model. 
\begin{lemma}
\label{lemma:pro}
Given probability distributions $p(x| C)$, \( q(x | C) \), and a marginal distribution \( p(C) \), where \( q(C) = p(C) \), the following euqation holds:
\begin{equation*}
E_{C \sim p(C)} \left[ D_{\mathrm{KL}}(p(x \mid C) \| q(x \mid C)) \right] = D_{\mathrm{KL}}(p(x, C) \| q(x, C)).
\end{equation*}
\end{lemma}

\begin{proof}
    \begin{equation}
\begin{aligned}
&\quad E_{C\thicksim p(C)}\left[D_{\mathrm{KL}}(p(x|C)\|q(x|C))\right]\\&=E_{C\thicksim p(C)}\left[{E}_{x\thicksim p(x|C)}\left[{log}\frac{p(x|C)}{q(x|C)}\right]\right]
\\&=E_{C\thicksim p(C)}\left[{E}_{x\thicksim p(x|C)}\left[{log}\frac{{p(x| C)}{p(C)}}{{q(x| C)}{p(C)}}\right]\right]
\\&=E_{C\thicksim p(C)}\left[{E}_{x\thicksim p(x|C)}\left[{log}\frac{{p(x| C)}{p(C)}}{{q(x| C)}{q(C)}}\right]\right]
\\&=E_{C\thicksim p(C)}\left[{E}_{x\thicksim p(x|C)}\left[{log}\frac{{p(x, C)}}{{q(x, C)}}\right]\right]
\\&=D_{\mathrm{KL}}(p(x, C) \| q(x, C))
\end{aligned}
\end{equation}

\end{proof}
According to Lemma~\ref{lemma:pro}, our objective can be transformed into studying the joint probability distribution of sequences composed of known conditions and reasoning objectives:
\begin{equation}
    E_{C \sim p(C)}\left[D_{\mathrm{KL}}[p_{\theta}(x \mid C)\,\|\,p_G(x \mid C)]\right]=D_{\mathrm{KL}}[p_{\theta}(x, C)\,\|\,p_G(x , C)]
\end{equation}

Since $x$ and $C$ are concatenated into a unified sequence $X$ during the derivation, we treat them as a single unknown learning sequence $X=[x,C] \in[V]^{d_x+d_c}$ to replace the joint probability distribution, where $V$ is the vocabular size and $d$ denotes the length of tokens.

\begin{assumption}[Initial Distribution]
\label{assump:init}
In practical reasoning tasks, the condition $C$ is typically known. Accordingly, we treat the second half of the initial masked state as given. To mitigate the divergence issue that arises when using the stationary distribution of a fully masked state as the initial distribution in the reverse process with respect to the KL divergence, we instead adopt a non-stationary distribution $p_{\text{init}}$ as the initialization:
\begin{equation*}
\begin{cases}
p^i_{init}=(1-\epsilon_T)\delta_{[\mathrm{MASK}]}+\frac{\epsilon_T}{V-1}\sum_{j\neq[\mathrm{MASK}]}\delta_j,
& \text{ if } i\le d_x \\
 p^i_{init}={\delta}_{X_i} & \text{ if } i>d_x
\end{cases}
\end{equation*}
where $\epsilon_T$ is a small positive contant that vanishes as $T\to\infty$. 
\end{assumption}

\begin{lemma}[The Approximate Error of the Initial State]
\label{lemma:initstate}
Assume $\epsilon_T=e^{-T}$ and the inital state follows the Assumption~\ref{assump:init}, we have:

\begin{equation}
D_{\mathrm{KL}}(q_T||p_{init})\lesssim de^{-T}.
\end{equation}
    
\end{lemma}

\begin{proof}
It is very easy to break down the goal into two parts:
\begin{equation}
\begin{aligned}
&D_{\mathrm{KL}}(q_{T}||p_{init})  =\sum_{x_T\in[V]^{d_x+d_c}}q_T(x_T)\operatorname{log}\frac{q_T(x_T)}{p_{init}(x_T)} \\
 & =\sum_{x_{T}\in[V]^{d_x+d_c}}q_{T}(x_{T})\operatorname{log}q_{T}(x_{T})
 -\sum_{x_{T}\in[V]^{d_x+d_c}}q_{T}(x_{T})\operatorname{log}p_{init}(x_{T}).
\end{aligned}
\end{equation}

Then, for the first part, we have according to the Jensen's inequality:
\begin{equation}\begin{aligned}
& \quad \sum_{x_{T}\in[V]^{d}}q_{T}(x_{T})\operatorname{log}q_{T}(x_{T})
\\&=\sum_{x_{T}\in[V]^{d}}\mathbb{E}_{x_{0}\sim q_{0}}\left[q_{T|0}(x_{T}|x_{0})\right]\log\mathbb{E}_{x_{0}\sim q_{0}}\left[q_{T|0}(x_{T}|x_{0})\right] 
\\&\leq\sum_{x_T\in[V]^d}\mathbb{E}_{x_0\sim q_0}
q_{T|0}(x_T|x_0)\log q_{T|0}(x_T|x_0)
 \\&=\mathbb{E}_{x_0\sim q_0}\left[\sum_{x_T\in[V]^d}q_{T|0}(x_T|x_0)\log q_{T|0}(x_T|x_0)\right] \\
&{=}\sum_{i=1}^d\mathbb{E}_{x_0^i\sim q_0^i}\left[\sum_{x_T^i\in[V]}q_{T|0}^i(x_T^i|x_0^i)\log q_{T|0}^i(x_T^i|x_0^i)\right]
\end{aligned}\end{equation}

where $d = d_x+d_C$, since the condition will not change during the forward process, so we have:
\begin{equation}
\label{eq:zero}
\sum_{i=d_x+1}^d\mathbb
{E}_{x_0^i\sim q_0^i}\left[\sum_{x_T^i\in[V]}q_{T|0}^i(x_T^i|x_0^i)\log q_{T|0}^i(x_T^i|x_0^i)\right]=0
\end{equation}

Then, by taking advantage of the property that the forward process remains unchanged after being masked, we have:
\begin{equation}
q_{T|0}^i(y|x)\log q_{T|0}^i(y|x)=
\begin{cases}
e^{-T}\log e^{-T} & \mathrm{if~}y=x\neq\mathbf{M} \\
(1-e^{-T})\log{(1-e^{-T})} & \mathrm{if~}y\neq x=\mathbf{M} \\
0 & \mathrm{otherwise} 
\end{cases}
\end{equation}

Then we can further simplify the first item: 
\begin{equation}
\label{eq:simplyforfir}
\begin{aligned} 
& \quad \sum_{i=1}^{d_x}\mathbb{E}_{x_0^i\sim q_0^i}\left[\sum_{x_T^i\in[V]}q_{T|0}^i(x_T^i|x_0^i)\log q_{T|0}^i(x_T^i|x_0^i)\right]
\\&\leq\sum_{i=1}^{d_x} \mathbb{E}_{x_0^i\sim q_0^i}[1\{x_0^i\neq[\mathbf{M}]\}]\left(e^{-T}\log e^{-T}+(1-e^{-T})\log(1-e^{-T})\right)
\end{aligned} 
\end{equation}

According to the Taylor expansion $(1-x)\log(1-x)=-x+O(x^2)$ , Eq~\ref{eq:zero} and Eq~\ref{eq:simplyforfir}, we can get that:

\begin{equation}
\label{eq:firstpart}
\begin{aligned}
& \quad \sum_{i=0}^d\mathbb{E}_{x_0^i\sim q_0^i}\left[\sum_{x_T^i\in[V]}q_{T|0}^i(x_T^i|x_0^i)\log q_{T|0}^i(x_T^i|x_0^i)\right]
\\&=\quad \sum_{i=1}^{d_x}\mathbb{E}_{x_0^i\sim q_0^i}\left[\sum_{x_T^i\in[V]}q_{T|0}^i(x_T^i|x_0^i)\log q_{T|0}^i(x_T^i|x_0^i)\right]+0
\\&\leq\sum_{i=1}^{d_x} \mathbb{E}_{x_0^i\sim q_0^i}[1\{x_0^i\neq[\mathbf{M}]\}]\left(e^{-T}\log e^{-T}+(1-e^{-T})\log(1-e^{-T})\right)
\\&=\left(\sum_{i=1}^{d_x}(1-q_{0}^{i}([\mathbf{M}]))\right)e^{-T}(-T-1)+O(d_xe^{-2T})
\end{aligned} 
\end{equation}

For the second part, we can adopt a similar simplified method like Eq~\ref{eq:zero} and Eq~\ref{eq:simplyforfir}:

\begin{equation}
\label{eq:secondpart}
\begin{aligned} 
& \quad \sum_{x_{T}\in[V]^{d}}q_{T}(x_{T})\log p_{init}(x_{T})  
\\ & =\sum_{x_T\in[V]^d}\left[\sum_{x_0\in[V]^d}q_{T|0}(x_T|x_0)q_0(x_0)\right]\log p_{init}(x_T) 
\\ & =\mathbb{E}_{x_0\sim q_0}\mathbb{E}_{x_T\sim q_{T|0}(\cdot|x_0)}[\log p_{init}(x_T)] 
 \\ & =\sum_{i=1}^d\mathbb{E}_{x_0^i\sim q_0^i}\mathbb{E}_{x_T^i\sim q_{T|0}^i(\cdot|x_0^i)}[\log p_{init}^i(x_T^i)] 
\\ & =\sum_{i=1}^{d_x}\mathbb{E}_{x_{0}^{i}\sim q_{0}^{i}}\left[1\{x_{0}^{i}\neq[\mathbf{M}]\}\left((1-e^{-T})\log(1-\epsilon)+e^{-T}\log\left(\frac{\epsilon}{S-1}\right)\right)\right]+0
\\ & =\sum_{i=1}^{d_x}\mathbb{E}_{x_{0}^{i}\sim q_{0}^{i}}\left[1\{x_{0}^{i}\neq[\mathbf{M}]\}\left((1-e^{-T})\log(1-\epsilon)+e^{-T}\log\left(\frac{\epsilon}{S-1}\right)\right)\right] 
\\ & \quad +\sum_{i=1}^{d_x}\mathbb{E}_{x_0^i\sim q_0^i}\left[1\left\{x_0^i=[\mathbf{M}]\right\}\right]\log(1-\epsilon) 
\\ & =\left(\sum_{i=1}^{d_x}(1-q_{0}^{i}([\mathbf{M}]))\right)\left[(e^{-T}-1)\epsilon+e^{-T}\log\left(\frac{\epsilon}{S-1}\right)\right] \\
 & \quad +\left(\sum_{i=1}^{d_x}(1-q_0^i([\mathbf{M}]))\right)\epsilon-d_x\epsilon+O(d_x\epsilon^2)
\end{aligned}
\end{equation}

Then, combine the Eq~\ref{eq:firstpart} and Eq~\ref{eq:secondpart}, we can have:
\begin{equation}
\begin{aligned}
D_\mathrm{KL}(q_{T}||p_{init}) & \lesssim d_x\epsilon+\left(\sum_{i=1}^{d_x}(1-q_{0}^{i}(\mathbf{M}))\right)\cdot \\
 & \left(e^{-T}\left(-T-1\right)-(e^{-T}-1)\epsilon-e^{-T}\log\left(\frac{\epsilon}{S-1}\right)-\epsilon\right) \\
 & \leq d_x\epsilon+d_xe^{-T}\left((-T-1)-\epsilon-\log\left(\frac{\epsilon}{S-1}\right)\right)
\\&\lesssim d_xe^{-T}.
\end{aligned}
\end{equation}
\end{proof}

\begin{assumption}[Early Stop]
\label{assump:earlystop}
To avoid divergence of the scoring function $f$ at $t=0$, we assume that the denoising process terminates at a small time $\delta$ approaching zero, rather than exactly at $t=0$.
\end{assumption}

\begin{assumption}[Score Estimation Error]
\label{assump:scoreestimate}
The estimated score function $\mathcal{L}$ for $f_t$ at time $t$ satisfies
\begin{equation*}
\sum_{k=0}^{N-1} (t_{k+1}-t_k)\,\mathcal{L}(f_{T-t_k}) \;\leq\; \varepsilon_{\text{score}}.
\end{equation*}
where $N$ denotes the number of time slices in the denoising process.
\end{assumption}

\begin{assumption}[Bounded Score Estimate]
\label{assump:bounded}
We assume that the absolute value of the scoring function has an upper bound. Formally, there exists a positive number $M>1$ that satisfies
$|\log f_{T-t_{k}}|\leq\log M,\forall k=1,\ldots,N.$
\end{assumption}

\begin{lemma}[Convergence  of Masked Diffusion~\cite{absorbtheory}]

According to the Assumption~\ref{assump:init}, Assumption~\ref{assump:earlystop},Assumption~\ref{assump:scoreestimate}, Assumption~\ref{assump:bounded} and Lemma~\ref{lemma:initstate}, we have the bound that: 
    
\begin{equation*}
\begin{aligned}
 D_{\mathrm{KL}}[p_{\theta}(x, C)\,\|\,p_G(x , C)]\le\;L e^{-T} + \varepsilon_{\mathrm{score}} 
 &\\+ L\bigl(T+\log(M\delta^{-1})\bigr)\frac{(T+\log\delta^{-1})^2}{N}.
\end{aligned}
\end{equation*}
where $L$ denotes the length of $x$.
\end{lemma}

\begin{proof}
Due to space limitations, we present the core steps for proving the lemma here. For a more detailed introduction, please refer to~\cite{absorbtheory}.

First of all, based on the Assumption~\ref{assump:earlystop}, we can simplify the goal to the following format:
\begin{equation}
\begin{aligned}
 D_{\mathrm{KL}}[p_{\theta}(x, C)\,\|\,p_G(x , C)]&=D_{\mathrm{KL}}[p_{\theta}(X)\,\|\,p_G(X)]
\\&=D_\mathrm{KL}(\bar{q}_{\delta}||q_{\delta})
\end{aligned}
\end{equation}

Then, using the Girsanov change-of-measure technique, we can obtain:
\begin{equation}
\begin{aligned}
  &D_\mathrm{KL}(\bar{q}_{\delta}||q_{\delta})\leq D_\mathrm{KL}(\bar{q}_{T:\delta}||q_{T:\delta}) 
 \\& =D_\mathrm{KL}(\bar{q}_T||q_T)
\\& \quad +\mathbb{E}\left[\int_{\delta}^{T}\left(f_{t}(X_{t})-s_{t}(X_{t})+s_{t}(X_{t})\log\frac{s_{t}(X_{t})}{f_{t}(X_{t})}\right)*Q\mathrm{d}t\right]
\end{aligned}\end{equation}

Since $\bar{q}_T=p_{init}$ and
according to the Llema~\ref{lemma:initstate}, we have:
\begin{equation}
\mathcal{L}_{init}=D_\mathrm{KL}(\bar{q}_T||q_T)\lesssim Le^{-T}.
\end{equation}

By further deriving the equation, we can decouple it into two parts:
\begin{equation}\begin{aligned}
 & \mathbb{E}\left[\int_{\delta}^{T}\left(f_{t}(X_{t})-s_{t}(X_{t})+s_{t}(X_{t})\log\frac{s_{t}(X_{t})}{f_{t}(X_{t})}\right)*Q\mathrm{d}t\right]
 \\& =\sum_{k=0}^{N-1}\int_{t_k}^{t_{k+1}}\mathbb{E}_{X_t}\left[\left(f_{t}(X_{t})-s_{t}(X_{t})+s_{t}(X_{t})\log\frac{s_{t}(X_{t})}{f_{t}(X_{t})}\right)*Q\right]\mathrm{d}t
\end{aligned}\end{equation}
One part is below, denoiting the score estimation loss:
\begin{equation}\begin{aligned}
\mathcal{L}_{score} & =\sum_{k=0}^{N-1}(t_{k+1}-t_k)\cdot 
\\&  \mathbb{E}_{x_{t_k}}\sum\left(f_{t_k}(x_{t_k})+s_{t_k}(x_{t_k})log\frac{s_{t_k}(x_{t_k})}{f_{t_k}(x_{t_k})}-s_{t_k}(x_{t_k}))\right)Q\leq\varepsilon_{score}.
\end{aligned}\end{equation}

According to~\cite{absorbtheory}, the remaining part can be divide into:
\begin{equation}\mathcal{L}_{disc}\lesssim(T+\log\delta^{-1}+\log M)L\frac{(T+\log\delta^{-1})^2}{N}.\end{equation}

Combining these three inequalities, we can prove this lemma.
\end{proof}

\end{proof}

\section{Complexity Analysis}
Here, we analyze the algorithmic complexity of the proposed self-reflective sampling strategy. The denoising process has $T$ steps, with self-reflection every $k$ steps, resulting in $m=T/k$ reflective steps. At each reflective step, $p$ candidate hypotheses are generated and evaluated, with each logical token having a fixed length of $L$. The reflection process includes three sub-operations: (1) generating hypotheses, (2) verifying them via deductive reasoning, and (3) selecting the optimal hypothesis. Each sub-operation requires one model evaluation, so the computational cost per reflection is $\mathcal{O}(3p)$. The $(T-m)$ non-reflective denoising steps have constant cost, yielding $\mathcal{O}(T-m)$. Thus, the total time complexity is $\mathcal{O}(3pm + T - m)$, and the space complexity is $\mathcal{O}(pL)$.

\end{document}